\definecolor{lightcyan}{rgb}{0.88,1,1} 
\definecolor{lightgray2}{rgb}{0.9,0.9,0.9}
\definecolor{lightgreen}{rgb}{0.8, 0.95, 0.8}
\newcommand{\ours}{Temporal Experts Averaging}
\newcommand{\oursab}{TEA}
\newcommand{\eg}{e.g.,\xspace}
\newtheorem{lemma}{Lemma}
\newtheorem{proposition}{Proposition}
\newtheorem{assumption}{Assumption}
\title{Scaling Up Temporal Domain Generalization via \ours{}}
\author{
 \textbf{Aoming Liu\textsuperscript{1}},
 \textbf{Kevin Miller\textsuperscript{1}},
 \textbf{Venkatesh Saligrama\textsuperscript{1}},
 \textbf{Kate Saenko\textsuperscript{1}},
\\
 \textbf{Boqing Gong\textsuperscript{1}},
 \textbf{Ser-Nam Lim\textsuperscript{2}},
 \textbf{Bryan A. Plummer\textsuperscript{1}}
\\
\\
 \textsuperscript{1}Boston University,
 \textsuperscript{2}University of Central Florida
\\
 \small{
   \textbf{Correspondence:} \href{mailto:amliu@bu.edu}{amliu@bu.edu}
 }
}
\begin{document}
\maketitle

\begin{abstract}
Temporal Domain Generalization (TDG) aims to generalize across temporal distribution shifts, \eg lexical change over time.  Prior work often addresses this by predicting future model weights.  However, full model prediction is prohibitively expensive for even reasonably sized models. Thus, recent methods only predict the classifier layer, limiting generalization by failing to adjust other model components. To address this, we propose \ours{} (\oursab), a novel and scalable TDG framework that updates the entire model using weight averaging to maximize generalization potential while minimizing computational costs. Our theoretical analysis guides us to two steps that enhance generalization to future domains. First, we create expert models with functional diversity yet parameter similarity by fine-tuning a domain-agnostic base model on individual temporal domains while constraining weight changes. Second, we optimize the bias-variance tradeoff through adaptive averaging coefficients derived from modeling temporal weight trajectories in a principal component subspace. Expert's contributions are based on their projected proximity to future domains. Extensive experiments across 7 TDG benchmarks, 5 models, and 2 TDG settings shows \oursab{} outperforms prior TDG methods by up to 69\% while being up to 60x more efficient\footnote{Code: \url{https://github.com/zxcvfd13502/TEA}}.
\end{abstract}
\section{Introduction}

Temporal Domain Generalization (TDG)~\cite{Bai2022TemporalDG, nasery2021training, qin2022generalizing, xie2024evolving, xie2024enhancing, yong2023continuous, xie2024weight} aims to generalize to unseen future data under temporal distribution shift without retraining the models, as illustrated in Fig.~\ref{fig:tdg_example}. Unlike traditional Domain Generalization (DG), which lacks target domain information~\cite{Li2017DeeperBA, Muandet2013DomainGV, Li2018DomainGW, Li2018DeepDG}, TDG often leverages temporal patterns for prediction to better adapt the models for future domains by predicting model weights, such as forecasting NLP research trends~\cite{yao2022wild}. However, prior work has scaling limitations.
For example, early brute-force methods predict entire models, but have prohibitive computational costs when scaling up the model and dataset size~\cite{nasery2021training,Bai2022TemporalDG,qin2022generalizing}. As shown in Fig.~\ref{fig:classifier_only}, recent methods improve efficiency by only predicting the classifier~\cite{xie2024evolving,xie2024weight}, but sacrifice generalization capabilities from keeping other model components frozen. 
Thus, these methods struggle to surpass basic ERM baselines when scaling-up benchmarks~\cite{yao2022wild, Lin2022TheCB}.

\begin{figure}[t]
  \centering
    \begin{subfigure}{0.90\linewidth}
    \centering
    \includegraphics[width=0.99\linewidth]{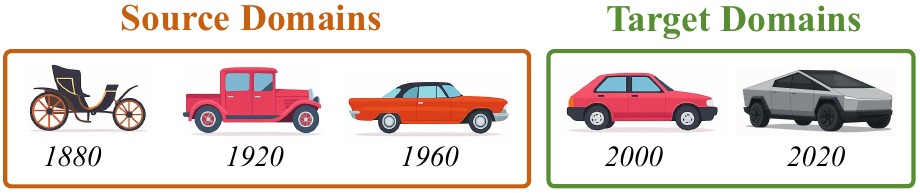}
    \caption{Temporal distribution shift in vehicle appearance.}
    \label{fig:vision_tdg}
  \end{subfigure}
  \\
  \vspace{1mm}
  \begin{subfigure}{0.90\linewidth}
    \centering
    \includegraphics[width=0.99\linewidth]{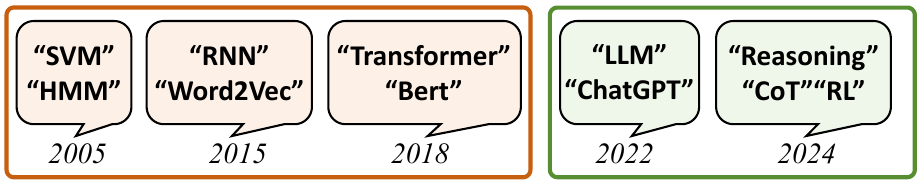}
    \caption{Temporal distribution shift in NLP paper keywords.}
    \label{fig:nlp_tdg}
  \end{subfigure}
    \vspace{-2mm}
    \caption{Examples of temporal domain generalization (TDG) span both \textbf{(a)} vision and \textbf{(b)} language tasks. TDG aims at enabling models trained on historical data to directly generalize to future data without retraining.}
    \label{fig:tdg_example}
\vspace{-4mm}
\end{figure}

\begin{figure*}[t]
  \centering
  \begin{subfigure}{0.28\linewidth}
    \centering
    \includegraphics[width=0.99\linewidth]{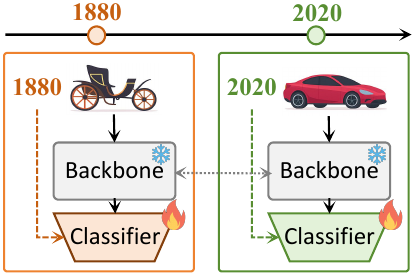}
    \caption{Classifier-only TDG framework}
    \label{fig:classifier_only}
  \end{subfigure}
  \hspace{0.01\linewidth}  
  \begin{tikzpicture}
    \draw[dashed, line width=0.5pt] (0,0) -- (0,3.5); 
  \end{tikzpicture}
  \hspace{0.01\linewidth}
  \begin{subfigure}{0.56\linewidth}
    \centering
    \includegraphics[width=0.99\linewidth]{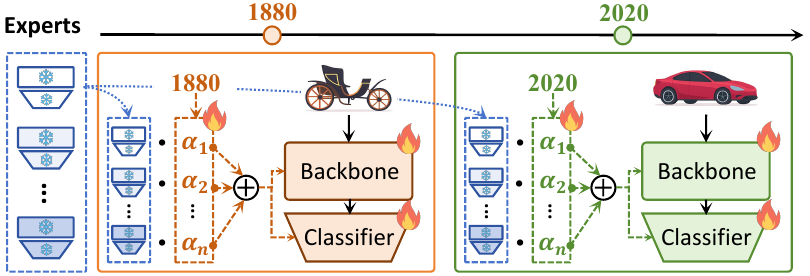}
    \caption{Our Temporal Expert Averaging (TEA) framework}
    \label{fig:our_tea}
  \end{subfigure}
  \vspace{-2mm}
    \caption{TDG framework comparison. \textbf{(a) Classifier-only TDG} \cite{xie2024evolving,xie2024weight} only predicts future classifiers to reduce computational costs in scaled-up scenarios, but limits generalization potential by neglecting other model components.  \textbf{(b) Our Temporal Expert Averaging (TEA)} enables higher generalization potential by adjusting the entire model through predicting future averaging coefficients of temporal experts capturing diverse functionalities. The low-dimensional nature of these coefficients ensures TEA's efficiency in scaled-up scenarios.}
  \label{fig:tdg_comp}
  \vspace{-4mm}
\end{figure*}
To address the scaling challenges, we propose \ours{} (\oursab), a TDG framework based on weight averaging (WA) that predicts the averaging coefficients of temporal experts for future domains. However, WA methods for DG, \eg~\cite{cha2021swad, rame2022diwa, modelsoup}, lack mechanisms to exploit temporal patterns in TDG. 
We identify two key capabilities required to leverage temporal patterns for WA through a bias-variance-covariance-locality decomposition analysis of generalization error.  First, model weights need functional diversity, yet parameter similarity. Second, optimizing averaging coefficients to achieve better bias-variance tradeoffs than uniform averaging.

To this end, our \oursab{} satisfies the first criteria by training a domain-agnostic base model on all source domains, followed by constrained incremental fine-tuning on each individual domain. To fulfill the second criteria, we  extract principal components from the deviations between expert weights and the base model, creating a low-dimensional subspace to model temporal weight trajectories. This enables forecasting future domain positions and averaging experts based on their projected proximity to the future domain.
This enables \oursab{} to temporally-adapt all model parameters with computational costs comparable to standard ERM training, offering higher generalization potential than merely adjusting the classifier.

The superiority of TEA is demonstrated through comprehensive evaluation across 7 diverse TDG benchmarks and 5 different models, covering both vision and language tasks. Beyond standard TDG with simultaneous access to all source domains, we also evaluate on Continual Domain Generalization over Temporal Drift (CDGTD) settings, where new domains arrive sequentially. Across this extensive evaluation, TEA consistently achieves new state-of-the-art results, outperforming prior TDG methods by up to 69\% while being up to 60x more efficient.

Our contributions can be summarized as follows:

\begin{itemize}[nosep,leftmargin=*]
    \item We propose \oursab, a novel weight-averaging-based TDG framework that efficiently enhances generalization across temporal shifts with broad model/dataset compatibility.
    \item We provide valuable theoretical insights on the under-explored WA-TDG integration, design our method based on these insights, and validate our insights and method through superior generalization performance across various benchmarks.
    \item We enhance TDG evaluation comprehensiveness by addressing both TDG and CDGTD, unlike prior work that typically focused on just one setting. This includes introducing CLEAR-10 and CLEAR-100 \citep{Lin2022TheCB} as new evaluation benchmarks for TDG.
\end{itemize}

\section{Related Work}

\noindent\textbf{Temporal Domain Generalization (TDG)} \cite{OrtizJimnez2019CDOTCD, mancini2019adagraph, Wang2020ContinuouslyID, Bai2022TemporalDG, nasery2021training, Zeng2023ForeseeWY, wang2022evolving, xie2024evolving, xie2024enhancing, yong2023continuous, xie2024weight} exploits temporal patterns in ordered domains with smooth distribution shifts to enhance generalization to future domains. Early approaches like GI~\cite{nasery2021training} and DRAIN~\cite{Bai2022TemporalDG} predict entire model parameters, but face computational challenges with large-scale models, while recent methods like EvoS~\cite{xie2024evolving} and W-Diff~\cite{xie2024weight} reduce costs by only adjusting classifiers, potentially limiting generalization. TDG encompasses multiple settings: the original setting with simultaneous access to all source domains, Continual Domain Generalization over Temporal Drift (CDGTD) with sequentially available domains, and Continuous Temporal Domain Generalization (CTDG) for continuously distributed temporal data. We focus on the original TDG and CDGTD settings as CTDG remains impractical for most realistic benchmarks.

\smallskip
\noindent\textbf{Domain Adaptation and Generalization.}  Enabling models to perform well on out-of-distribution (OOD) data has been a crucial challenge in machine learning. Two specific tasks highly relevant to our work are Domain Adaptation (DA) and Domain Generalization (DG). DA methods~\cite{saenko2010adapting, Sun2015ReturnOF, Sun2016DeepCC, Gong2012GeodesicFK, Tzeng2017AdversarialDD, li2016revisiting} typically adapt models against distribution shift by utilizing data from the target domain. In contrast, DG methods~\cite{Li2017DeeperBA, Muandet2013DomainGV, Li2018DomainGW, Li2017LearningTG, gulrajanisearch, Li2018DeepDG, Li2019EpisodicTF} operate without target domain information, solely leveraging source domain patterns to enhance OOD generalization. 

\smallskip
\noindent\textbf{Weight Averaging} (WA)~\cite{cha2021swad, cha2022miro, rame2022diwa, modelsoup} proves effective for Domain Generalization, with DiWA~\cite{rame2022diwa} showing reduced variance against marginal distribution shifts. While WA is also used in Multi-task Learning~\cite{ilharco2022patching,yadav2023ties,ortiz2023task,wang2024sam,stoica2023zipit}, with our design partly inspired by task arithmetic~\cite{ilharco2022editing}, fundamental differences between MTL and TDG make direct application impractical.
\section{Temporal Experts Averaging}
\label{sec:method}

\begin{figure*}[t]
  \centering
  \includegraphics[width=0.93\linewidth]{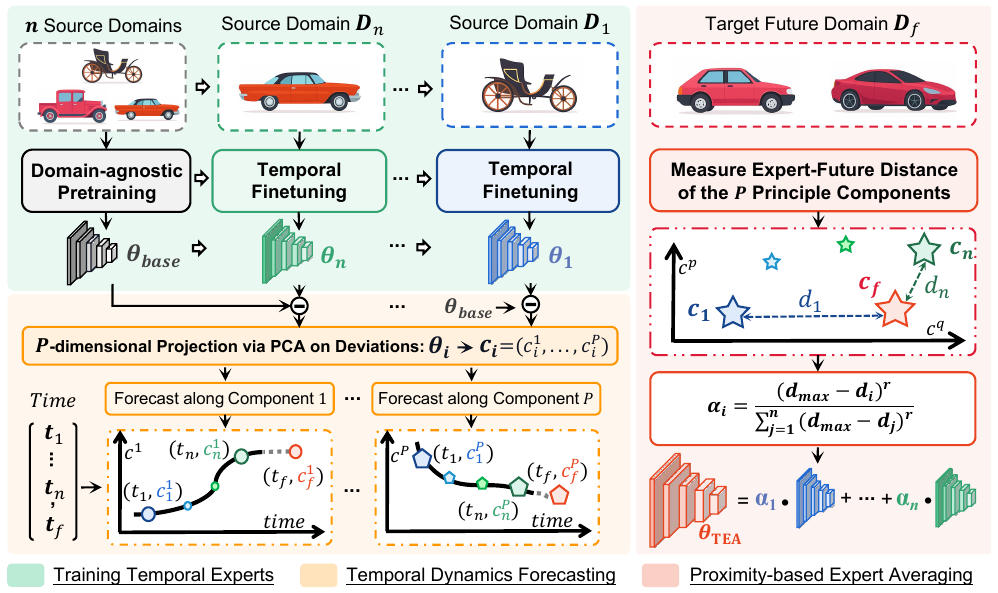}
  \vspace{-2mm}
    \caption{Overview of our TEA framework. Firstly, we obtain a base model $\theta_{\text{base}}$ through domain-agnostic pretraining on all source domains, then derive experts $\theta_1,...,\theta_n$ via constrained domain-specific incremental finetuning in reverse temporal order. Secondly, we apply PCA to expert weight deviations $\{\theta_i-\theta_{\text{base}}\}_{i=1}^{n}$, forecast future positions along the $P$ most significant components with Autoregressive Integrated Moving Average (ARIMA), effectively projecting experts into a low-dimensional space for prediction. Finally, we assign averaging coefficients based on projected expert-future proximity, where closer experts receive higher coefficients.}
  \label{fig:tdg_overview}
  \vspace{-4mm}
\end{figure*}

Let $\mathcal{X}$ be the input space, $\mathcal{Y}$ the label space, $\ell: \mathcal{Y}^2 \rightarrow \mathbb{R}^+$ a loss function, $\{D_i\}$ a sequence of domains with timestamps $t_i \in \mathcal{T}$ and distributions $p_i$.
Given source domains $\mathbf{D}_S =\{D_{i}\}_{i=1}^{S}$, where $t_1<\ldots<t_S$, 
and a neural network $f(\cdot, \theta): \mathcal{X} \rightarrow \mathcal{Y}$ with weights $\theta$, we aim to minimize the generalization error at future time $t_f > t_S$:
\begin{equation}
\mathcal{E}_{f}(\theta) = \mathbb{E}_{(x,y)\sim p_{f}}[\ell(f(x, \theta), y)].
\label{eq:future_error}
\end{equation}
 We obtain the weights of $S$ temporal expert models $\{\theta_i\}_{i=1}^S = \{\theta(l_{i})\}_{i=1}^S$, where $\theta_i$ is optimized for domain $D_{i}$ while using data from other domains, with learning procedure noted as $l_{i}=\{\{D_{i}\}_{i=1}^{S},t_i, c\}$ and other configurations (\eg hyper-parameters) as $c$. We leverage temporal patterns to derive adaptive coefficients $\{\alpha_i\}_{i=1}^S$, where $\sum_{i=1}^S \alpha_i = 1$ and $\alpha_i \geq 0$, for combining expert weights into the final weight $\theta_{\text{TEA}}$, formulated as:
\begin{align}
    f_{\text{TEA}} &\triangleq f(\cdot, \theta_{\text{TEA}}), \nonumber\\
    \theta_{\text{TEA}} &\triangleq \sum_{i=1}^S \alpha_i\left(\{t_i\}_{i=1}^S, \{\theta_i\}_{i=1}^S,t_f\right) \cdot \theta_i.
\end{align}
\smallskip
To leverage temporal shift patterns for reducing future generalization error, we gain insight into \oursab{} through theoretical analysis in Section~\ref{sec:insights}. Following the insights, we implement our \oursab{} by creating functionally diverse yet parametrically similar experts $\{\theta_i\}_{i=1}^S$ (Section~\ref{sec:train_expert}) and determining coefficients $\{\alpha_i\}_{i=1}^S$ based on expert-future proximity (Section~\ref{sec:pca_estimate_avg}). Section~\ref{sec:TWA_CDTDG} describes how we adapt \oursab{} to the CDGTD setting.

\subsection{Theoretical Analysis and Insights}
\label{sec:insights}

To gain insight into \oursab{}, we extend DiWA's~\cite{rame2022diwa} theoretical analysis developed for DG to our WA-TDG integration setting. Since our primary goal is to guide method design, we briefly summarize the theoretical analysis and results in the main text, with complete derivations and proofs available in Appendix~\ref{sec:theory}.

\noindent\textbf{Bias-variance-covariance-locality Decomposition.} Similar to DiWA~\cite{rame2022diwa}, we introduce the bias-variance-covariance-locality (BVCL) decomposition of generalization error for TDG and TEA by leveraging the similarity between averaging in weight space and function space. Denoting $\mathbb{E}_{f} = \mathbb{E}_{(x,y)\sim p_{f}}$, $\mathbf{l} = \{l_1,\ldots,l_S\}$, $\bar{f}_{i}(x) = \mathbb{E}_{l_{i}}[f(x, \theta(l_{i}))]$, $\text{bias}_i = y - \bar{f}_{i}(x)$, $\text{var}_i=\mathbb{E}_{l_{i}}\left[ \left(f(x, \theta(l_{i})) - \bar{f}_{i}(x)\right)^2 \right], \text{cov}_{i,j} = \mathbb{E}_{l_{i},l_{j}}\left[ \left(f(x, \theta(l_{i})) - \bar{f}_{i}(x)\right) \left(f(x, \theta(l_{j})) - \bar{f}_{j}(x)\right) \right]$ and $\Delta_{\{\theta\}} = \max_{i=1}^S \|\theta_i - \theta_{\text{TEA}}\|_2$, the expected generalization error on future timestamp $t_f$ of $\theta_{\text{TEA}} = \sum_{i=1}^S \alpha_i \theta_i$ over the joint distribution of the learning procedures is:
\begin{align}
&\mathbb{E}_{\mathbf{l}}[\mathcal{E}_{f}(\theta_{\text{TEA}})] = \mathbb{E}_{f}\left[ \mathcal{B} + \mathcal{V} +\mathcal{C}  \right] + O(\bar{\Delta}^2), \label{eq:bvcl}\\
&\mathcal{B}=\left(\sum_{i=1}^S \alpha_i \cdot \text{bias}_i\right)^2,\ \mathcal{V}=\sum_{i=1}^S \alpha_i^2 \cdot \text{var}_i, \nonumber\\
&\mathcal{C} = \sum_{i \neq j} \alpha_i \alpha_j \text{cov}_{i,j},\ 
\bar{\Delta}^2 = \mathbb{E}_{\mathbf{l}}\left[\Delta_{\{\theta\}}\right] .\nonumber
\end{align}

To reduce future generalization error in Equation~\ref{eq:bvcl}, we can control learning procedures $\{l_{i}\}_{i=1}^S$ affecting expert weights $\{\theta_i\}_{i=1}^S$ and modify averaging coefficients $\{\alpha_i\}_{i=1}^S$, which constitute the key differences between our TEA and WA for typical DG. While finding optimal solutions remains challenging due to real-world complexity, qualitative analysis provides valuable insights summarized as two tradeoffs implemented through experts and coefficients respectively. See Appendix~\ref{sec:theory} for detailed analysis and assumptions.

\smallskip
\noindent\textbf{Insight 1} \textit{Tradeoff between Functional Diversity and Parameter Similarity among Experts.} Covariance $\mathcal{C}$ reduction necessitates functional diversity among experts, while the locality constraint $\bar{\Delta}^2$ demands parameter similarity among experts.

\smallskip
\noindent\textbf{Insight 2} \textit{Tradeoff between Bias and Variance via Averaging Coefficients.} Reducing variance $\mathcal{V}$ requires averaging weights evenly, while reducing bias $\mathcal{B}$ demands concentrating coefficients on experts with lower bias magnitudes on future data.

\subsection{Training Temporal Experts}
\label{sec:train_expert}

TDG assumes smooth temporal distribution shifts with moderate changes between adjacent domains. This allows an expert to be fine-tuned for learning domain-specific functionality of neighboring domains with minimal parameter adjustments. Therefore, we can satisfy Insight 1 through incremental domain-specific fine-tuning while constraining minimal parameter changes. However, a prerequisite is that experts must have already thoroughly learned the intrinsic distribution.

A "pretraining-finetuning" approach is adopted for our expert training that efficiently generates diverse temporal experts with similar parameters. The overall process can be formulated as:
\begin{align}
\theta_{\text{base}}&=\theta_{S+1} = \theta(l_{\text{ERM}}(\textbf{D}_{S})),  \tag{Pretraining}\\
\theta_{i} &= \theta(l_{\text{SI}}(\{D_{t_i}\}, \theta_{i+1})), \tag{Finetuning}
\end{align}
where  $i \in \{S, \ldots, 1\}, \textbf{D}_{S}=\{D_{1}, \ldots, D_{S}\}$, $l_{\text{ERM}}$ represents the Empirical Risk Minimization (ERM) learning process, and $l_{\text{SI}}$ represents the learning process with Synaptic Intelligence (SI)~\cite{Zenke2017ContinualLT} constraining parameter changes.
\smallskip

\smallskip
\noindent\textbf{Pretraining} aims to capture intrinsic, time-invariant distributions. We apply standard ERM training with source domains $\{D_{1}, \ldots, D_{S}\}$. No temporal information is incorporated during this stage. Unlike WA for DG~\cite{cha2021swad, cha2022miro, rame2022diwa, modelsoup}, we update normalization layers during pretraining to prevent underfitting, as TDG exhibits smaller distribution differences than DG settings.

\smallskip
\noindent\textbf{Temporal Finetuning} sequentially adapts the base model to capture time-varying distributions. We freeze the normalization layers and proceed in reverse temporal order ($t_S \rightarrow \ldots \rightarrow t_1$) in this stage. For each domain $D_i$, we uniformly sample $K$ weights during finetuning, $\{\theta_{i}^{k}\}_{k=1}^{K}$, and expert $\theta_i$ is obtained by uniform averaging: 
$\theta_{i} = \sum_{k=1}^{K} \frac{1}{K} \theta_{i}^{k}$

SI~\cite{Zenke2017ContinualLT} is used to constrain parameter changes, which also prevent catastrophic forgetting of intrinsic distributions, but other continual learning methods can also be used. Since later fine-tuning stages are influenced by previous ones, we use reverse temporal order (recent to earliest) to better capture distributions from recent domains that more likely resemble future test domains under smooth distribution shift assumptions.

\subsection{Adaptive Weight Averaging}
\label{sec:pca_estimate_avg}

If future weights are available, we could satisfy Insight 2 by assigning coefficients based on expert-future weight proximity. 
However, precisely predicting the future in high-dimensional weight space is both hard and computationally prohibitive. In contrast, our \oursab{} approach only needs relative rankings of expert-future proximity.  Thus, we project experts into a low-dimensional space that captures the principal components of weight temporal evolution, enabling us to predict future positions and measure expert-future proximity efficiently for assigning averaging coefficients.


\smallskip
\noindent\textbf{PCA over Temporal Weight Deviation.} The weight deviations $\{\delta\theta_i\}_{i=1}^S,\ \delta\theta_i = \theta_i - \theta_{\text{base}}$ of all experts estimates weight dynamics under temporal distribution shifts. 
We apply PCA to $\{\delta\theta_i\}_{i=1}^{S}$ to decompose the principal components of weight temporal evolution and reduce noise. By considering only the $P$ most significant components $\{v_p\}_{p=1}^P$, we can obtain a $P$-dimensional space and project the experts into points in this space, where $\mathbf{c}_i=(c_i^1,...,c_i^P)$ is the projection of $\theta_i$:
\begin{align}
\mathbf{c}_i &= (c_i^1,...,c_i^P) \\
&= \left( \langle \theta_i - \theta_{\text{base}}, v_1 \rangle, \ldots, \langle \theta_i - \theta_{\text{base}}, v_P \rangle \right) \nonumber
\end{align}

\noindent\textbf{Principal Component Trajectory Forecasting.} We construct a temporal evolution trajectory of the $P$ principle components using all experts' projected points and their timestamps, $\{(\mathbf{c}_i,t_i)\}_{i=1}^{S}$. Then we predict the future domain position in this $P$-dimensional space by forecasting along this temporal evolution trajectory. As we often have limited temporal domains available leading to few historical points in the trajectory, we simply model the $P$-dimensional trajectory as $P$ separate time series, $\{(c_i^p,t_i)\}_{i=1}^{P} \text{ for } p \in \{1,...,P\}$, by treating all the dimensions independently. For prediction, we apply the Autoregressive Integrated Moving Average (ARIMA) model to each time series:
\begin{align}
c^p(t_f) = \text{ARIMA}(\{(c_i^p,t_i)\}_{i=1}^{S}, t_f) 
\end{align}
where $p \in \{1,...,P\}$ and $t_f$ is the future domain's timestamp. The predicted future point in the principle component space is $\mathbf{c}_f = (c^1(t_f),..., c^P(t_f))$.

\smallskip
\noindent\textbf{Distance-based Averaging Coefficients.} Based on Insight 2, we assign higher averaging coefficients to experts with greater expert-future proximity (lower expert-future distance) in the principal component space. Specifically, for expert $\theta_i$ with projected point $\mathbf{c}_i = (c_i^1,...,c_i^P)$ and our predicted future point $\mathbf{c}_f = (c^1(t_f),..., c^P(t_f))$, we calculate distance $d_i=\|\mathbf{c}_i - \mathbf{c}_f\|$. We then assign the averaging coefficient for $\theta_i$ as:
\begin{align}
\alpha_i = \frac{(d_{\max} - d_i)^r}{\sum_{j=1}^n (d_{\max} - d_j)^r},
\end{align}
where $d_{\max} = \max(d_1,...,d_n)$ and $r$ is a hyperparameter controlling the sharpness of the weighting distribution. Higher $r$ concentrates the averaging more on experts closer to the predicted future.

\subsection{TEA for CDTDG}
\label{sec:TWA_CDTDG}

The TEA method targets the original TDG setting with simultaneous access to all source domains, while the CDTDG setting only provides sequential access to domains. Simply sampling models during incremental learning fails because adjacent domains exhibit both temporal distribution shifts and newly introduced data variations, causing significant parameter differences that violate the locality constraints. 

We, therefore, slightly relax the CDTDG constraints by maintaining small memory buffers (\eg 10\%) of seen training data $\{d_{1}, d_{2}, \ldots, d_{S}\}$ from each domain. After training sequentially on all source domains, we can utilize the data in these stored buffers for our temporal finetuning and forecasting, avoiding the expensive cost of storing and revisiting full training data. Based on this relaxation, we apply the original TEA framework:
\begin{align}
&\theta_{\text{base}} = \theta_{S+1}= \theta(l_{\text{IncERM}}(\{\mathbf{D}_{S}\})) \nonumber \tag{Pretraining}\\
&\theta_{i} = \theta(l_{\text{SI}}(\{d_{t_i}\}, \theta_{i+1})),\nonumber \tag{Finetuning}
\end{align}
where $i \in \{S, \ldots, 1\}$, $l_{\text{IncERM}}$ is the incremental learning process with ERM, and $l_{\text{SI}}$ is the learning process with SI constraining parameter changes.


\begin{table*}[t]
\centering
\resizebox{\textwidth}{!}{
\begin{tabular}{c|c|ccccccccc|c}
\hline\hline
\multirow{2}{*}{Dataset} & \multirow{2}{*}{Metric} & \multicolumn{9}{c|}{Method} & \multirow{2}{*}{\textbf{TEA (ours)}} \\
\cline{3-11}
 & & ERM & IRM & CORAL & Mixup & LISA & GI$^{\S}$ & LSSAE$^{\S}$ & SWAD & DiWA & \\
\hline\hline
\multirow{3}{*}{\shortstack{Yearbook\\\citep{yao2022wild}}} 
 & $D_{S + 1}$ & 89.30 & \underline{97.09} & 95.94 & 94.98 & 95.51 & 97.42 & 93.93 & 97.18 & 97.66 &  \cellcolor{lightgreen!50}\textbf{97.71} \\
 & $\text{OOD}_{\text{avg.}}$ & 88.46 & 94.52 & 91.79 & 91.12 & 92.97 & \textbf{96.37} & 92.12 & 95.00 & \underline{95.36} & \cellcolor{lightgreen!50}95.95 \\
 & $\text{OOD}_{\text{worst}}$ & 86.81 & 92.58 & 88.84 & 88.35 & 91.29 & \textbf{95.73} & 88.75  & \underline{93.89} & 94.42&\cellcolor{lightgreen!50}94.80 \\
\hline
\multirow{3}{*}{RMNIST} 
 & $D_{S + 1}$ & \underline{98.15} & 95.10 & 93.04 & 97.11 & 96.21 & 97.78 & 96.73 & 97.93 & 97.67 &\cellcolor{lightgreen!50}\textbf{98.61} \\
 & $\text{OOD}_{\text{avg.}}$ & \underline{92.14} & 85.05 & 79.10 & 89.66 & 87.04 & 91.00 & 90.36 & 94.51 & 92.06 &\cellcolor{lightgreen!50}\textbf{94.47} \\
 & $\text{OOD}_{\text{worst}}$ & \underline{83.89} & 72.52 & 62.96 & 79.63 & 75.15 & 82.46 & 82.13 & 84.89 & 84.31 &\cellcolor{lightgreen!50}\textbf{88.83} \\
\hline
\multirow{3}{*}{\shortstack{FMoW\\\citep{yao2022wild}}} 
 & $D_{S + 1}$ & \underline{72.43} & 64.77 & 62.14 & 70.27 & 70.05 & 61.62 & 59.15 & 71.59 & 73.85 &  \cellcolor{lightgreen!50}\textbf{75.63} \\
 & $\text{OOD}_{\text{avg.}}$ & \underline{59.76} & 54.92 & 51.42 & 57.73 & 55.52 & 50.83 & 48.66 & 59.96 & 60.77 &  \cellcolor{lightgreen!50}\textbf{62.45} \\
 & $\text{OOD}_{\text{worst}}$ & \underline{49.85} & 46.51 & 42.19 & 48.04 & 44.61 & 42.78 & 41.38 & 50.48 & 51.00 & \cellcolor{lightgreen!50}\textbf{52.45} \\
\hline
\multirow{3}{*}{\shortstack{CLEAR-10\\\citep{Lin2022TheCB}}} 
 & $D_{S + 1}$ & \underline{80.83} & 77.50 & 77.57 & 78.57 & 71.50 & 72.73 & 55.63 & 69.20 & 81.03 &\cellcolor{lightgreen!50}\textbf{83.53} \\
 & $\text{OOD}_{\text{avg.}}$ & \underline{81.20} & 77.03 & 77.89 & 78.21 & 70.89 & 71.31 & 55.74 & 68.14 & 81.17 &\cellcolor{lightgreen!50}\textbf{83.16} \\
 & $\text{OOD}_{\text{worst}}$ & \underline{80.83} & 76.60 & 77.47 & 76.90 & 70.27 & 70.33 & 54.83 & 67.53 & 80.60 &\cellcolor{lightgreen!50}\textbf{82.43} \\
\hline
\multirow{3}{*}{\shortstack{CLEAR-100\\\citep{Lin2022TheCB}}} 
 & $D_{S + 1}$ & \underline{63.92} & 57.74 & 61.95 & 62.96 & 53.80 & 51.87 & 39.82 & 47.38 & 65.64 &\cellcolor{lightgreen!50}\textbf{67.39} \\
 & $\text{OOD}_{\text{avg.}}$ & \underline{63.19} & 56.79 & 60.53 & 62.42 & 52.82 & 51.06 & 39.41 & 46.04 & 64.71 &\cellcolor{lightgreen!50}\textbf{66.96} \\
 & $\text{OOD}_{\text{worst}}$ & \underline{62.62} & 56.24 & 59.46 & 61.93 & 52.08 & 50.32 & 38.87 & 45.18 & 63.96 &\cellcolor{lightgreen!50}\textbf{66.43} \\
\hline
\multirow{3}{*}{\shortstack{Huffpost\\\citep{yao2022wild}}} 
 & $D_{S + 1}$ & 72.74 & 71.04 & 71.34 & \underline{73.34} & 72.19 & 68.06 & - & 73.40 & 73.31 & \cellcolor{lightgreen!50}\textbf{73.43} \\
 & $\text{OOD}_{\text{avg.}}$ & \underline{71.50} & 70.31 & 70.08 & 71.16 & 70.24 & 66.32 & - & 71.59 & 71.51 &\cellcolor{lightgreen!50}\textbf{72.12} \\
 & $\text{OOD}_{\text{worst}}$ & \underline{69.63} & 68.97 & 68.68 & 69.29 & 68.60 & 64.64 & - & 70.10 & 70.18 &\cellcolor{lightgreen!50}\textbf{70.64} \\
\hline
\multirow{3}{*}{\shortstack{Arxiv\\\citep{yao2022wild}}} 
 & $D_{S + 1}$ & 57.49 & 51.11 & 50.98 & \underline{57.58} & 56.53 & 53.43 & - & 57.08 & 57.21 & \cellcolor{lightgreen!50}\textbf{59.28} \\
 & $\text{OOD}_{\text{avg.}}$ & 52.38 & 45.89 & 45.77 & \underline{52.77} & 52.41 & 49.19 & - & 52.96 & 52.80 &\cellcolor{lightgreen!50}\textbf{55.23} \\
 & $\text{OOD}_{\text{worst}}$ & 49.28 & 42.86 & 42.71 & 49.62 & \underline{49.67} & 46.13 & - & 50.09 & 49.92 &\cellcolor{lightgreen!50}\textbf{52.31} \\
\hline
\multirow{3}{*}{\textbf{Overall Avg.}} 
 & $D_{S + 1}$ & \cellcolor{lightcyan!50}76.41 & \cellcolor{lightcyan!50}73.48 & \cellcolor{lightcyan!50}73.28 & \cellcolor{lightcyan!50}\underline{76.40} & \cellcolor{lightcyan!50}73.68 & \cellcolor{lightcyan!50}71.70 & \cellcolor{lightcyan!50}- & \cellcolor{lightcyan!50}73.25 & \cellcolor{lightcyan!50}\underline{77.91} &\cellcolor{lightgreen!50}\textbf{79.37} \\
& $\text{OOD}_{\text{avg.}}$ & \cellcolor{lightcyan!50}72.66 & \cellcolor{lightcyan!50}69.22 & \cellcolor{lightcyan!50}68.08 & \cellcolor{lightcyan!50}\underline{71.87} & \cellcolor{lightcyan!50}69.13 & \cellcolor{lightcyan!50}67.87 & \cellcolor{lightcyan!50}- & \cellcolor{lightcyan!50}69.74 & \cellcolor{lightcyan!50}\underline{74.77}& \cellcolor{lightgreen!50}\textbf{75.76} \\
 & $\text{OOD}_{\text{worst}}$ & \cellcolor{lightcyan!50}\underline{69.13} & \cellcolor{lightcyan!50}64.90 & \cellcolor{lightcyan!50}62.04 & \cellcolor{lightcyan!50}67.54 & \cellcolor{lightcyan!50}65.95 & \cellcolor{lightcyan!50}64.63 & \cellcolor{lightcyan!50}- & \cellcolor{lightcyan!50}65.71 & \cellcolor{lightcyan!50}\underline{70.55} &  \cellcolor{lightgreen!50}\textbf{72.56} \\

\hline\hline
\end{tabular}
}
\vspace{-2mm}
\caption{Accuracy (\%) on all benchmarks under TDG setting. Baselines include ERM, IRM~\cite{arjovsky2019invariant}, CORAL~\cite{Sun2016DeepCC}, Mixup~\cite{zhang2018mixup}, LISA~\cite{yao2022improving}, GI~\cite{nasery2021training}, LSSAE~\cite{qin2022generalizing}, SWAD~\cite{cha2021swad}, and DiWA~\cite{rame2022diwa}. Best and second-best results are \textbf{bolded} and \underline{underlined}. For FMoW, CLEAR-10\&100, Huffpost and Arxiv, we only apply GI to classifiers due to backbone size limitations. LSSAE only applies to image benchmarks. ${\S}$ indicates TDG baselines.}
\label{tab:tdg_results}
\vspace{-4mm}
\end{table*}
\begin{table*}[t]
\centering

\resizebox{\textwidth}{!}{
\begin{tabular}{c|c|cccccccccc|c}
\hline\hline
\multirow{2}{*}{Dataset} & \multirow{2}{*}{Metric} & \multicolumn{10}{c|}{Method} & \multirow{2}{*}{\textbf{TEA (ours)}} \\
\cline{3-12}
 & & IncERM & Mixup & SimCLR & SwAV & EWC & SI & A-GEM & DRAIN$^{\S}$ & EvoS$^{\S}$ & W-Diff$^{\S}$ & \\
\hline\hline
\multirow{3}{*}{\shortstack{Yearbook\\\citep{yao2022wild}}} 
 & $D_{S + 1}$ & 96.61 & 90.21 & 95.94 & 97.37 & 97.18 & 97.09 & 94.36 & 96.23 & 97.37 & 97.32 & \cellcolor{lightgreen!50}\textbf{97.75} \\
 & $\text{OOD}_{\text{avg.}}$ & 94.72 & 89.83 & 93.07 & 94.27 & 95.12 & 94.67 & 90.96 & 94.71 & \textbf{95.53} & 95.03 & \cellcolor{lightgreen!50}\underline{95.29} \\
 & $\text{OOD}_{\text{worst}}$ & 93.48 & 88.43 & 89.65 & 91.44 & 93.64 & 93.48 & 88.88 & 93.73 & \textbf{94.78} & 94.05 & \cellcolor{lightgreen!50}\underline{94.40} \\
\hline
\multirow{3}{*}{RMNIST} 
 & $D_{S + 1}$ & 98.62 & 98.43 & 98.23 & 98.08 & 98.56 & 98.61 & 95.99 & 98.52 & 98.64 & \underline{98.70} & \cellcolor{lightgreen!50}\textbf{98.74} \\
 & $\text{OOD}_{\text{avg.}}$ & 92.80 & 92.38 & 90.98 & 90.85 & 92.02 & 93.27 & 86.95 & 93.09 & \underline{93.84} & \textbf{94.12} & \cellcolor{lightgreen!50}93.76 \\
 & $\text{OOD}_{\text{worst}}$ & 84.61 & 83.45 & 81.05 & 80.96 & 82.80 & 85.65 & 75.45 & 85.75 & 87.04 & \textbf{87.36} & \cellcolor{lightgreen!50}\underline{87.05} \\
\hline
\multirow{3}{*}{\shortstack{FMoW\\\citep{yao2022wild}}} 
 & $D_{S + 1}$ & 65.52 & 64.84 & 64.97 & 66.47 & 66.23 & 66.61 & 54.54 & 67.22 & 67.18 & \textbf{68.80} & \cellcolor{lightgreen!50}\underline{67.87} \\
 & $\text{OOD}_{\text{avg.}}$ & 53.99 & 52.00 & 53.20 & 54.51 & 54.55 & 54.89 & 47.61 & 55.05 & 54.64 & \textbf{55.86} & \cellcolor{lightgreen!50}\underline{55.21} \\
 & $\text{OOD}_{\text{worst}}$ & 45.23 & 42.54 & 44.71 & 45.29 & 45.80 & \underline{46.46} & 41.13 & 46.24 & 45.86 & \textbf{46.51} & \cellcolor{lightgreen!50}46.27 \\
\hline
\multirow{3}{*}{\shortstack{CLEAR-10\\\citep{Lin2022TheCB}}} 
 & $D_{S + 1}$ & 75.90 & 74.97 & 78.43 & 77.53 & 75.07 & 76.73 & 60.67 & 74.40 & 77.03 & 68.00 & \cellcolor{lightgreen!50}\textbf{79.20} \\
 & $\text{OOD}_{\text{avg.}}$ & 75.82 & 74.99 & \textbf{78.41} & \underline{78.05} & 73.71 & 76.07 & 59.49 & 74.52 & 77.06 & 67.85 & \cellcolor{lightgreen!50}77.87 \\
 & $\text{OOD}_{\text{worst}}$ & 74.83 & 74.10 & \textbf{77.73} & 77.13 & 72.30 & 75.00 & 58.17 & 73.97 & 76.87 & 66.03 & \cellcolor{lightgreen!50}\underline{77.43} \\
\hline
\multirow{3}{*}{\shortstack{CLEAR-100\\\citep{Lin2022TheCB}}} 
 & $D_{S + 1}$ & 56.73 & 51.68 & \textbf{60.52} & 58.89 & 56.22 & 31.76 & 23.61 & 54.74 & 57.02 & 52.33 & \cellcolor{lightgreen!50}\underline{58.93} \\
 & $\text{OOD}_{\text{avg.}}$ & 55.67 & 50.86 & \textbf{59.67} & 57.59 & 55.20 & 30.82 & 22.55 & 53.16 & 56.09 & 51.92 & \cellcolor{lightgreen!50}\underline{58.43} \\
 & $\text{OOD}_{\text{worst}}$ & 54.47 & 50.32 & \textbf{58.65} & 56.53 & 54.30 & 30.35 & 21.64 & 51.90 & 55.47 & 51.65 & \cellcolor{lightgreen!50}\underline{57.70} \\
\hline
\multirow{3}{*}{\shortstack{Huffpost\\\citep{yao2022wild}}} 
 & $D_{S + 1}$ & 73.57 & 73.07 & - & - & 73.64 & 72.58 & 72.23 & 73.42 & 73.42 & \underline{73.91} & \cellcolor{lightgreen!50}\textbf{73.99} \\
 & $\text{OOD}_{\text{avg.}}$ & 71.98 & 71.52 & - & - & 71.53 & 71.50 & 71.16 & 71.75 & \underline{72.36} & 72.29 & \cellcolor{lightgreen!50}\textbf{72.40} \\
 & $\text{OOD}_{\text{worst}}$ & 69.80 & 69.44 & - & - & 68.99 & 69.61 & 69.10 & 69.69 & 70.19 & \underline{70.40} & \cellcolor{lightgreen!50}\textbf{70.61} \\
\hline
\multirow{3}{*}{\shortstack{Arxiv\\\citep{yao2022wild}}} 
 & $D_{S + 1}$ & 56.22 & 56.64 & - & - & 56.60 & 49.98 & 52.02 & 56.04 & 56.60 & \underline{56.66} & \cellcolor{lightgreen!50}\textbf{57.34} \\
 & $\text{OOD}_{\text{avg.}}$ & 52.43 & 52.95 & - & - & 52.78 & 47.27 & 48.91 & 52.07 & 53.15 & \underline{53.43} & \cellcolor{lightgreen!50}\textbf{54.20} \\
 & $\text{OOD}_{\text{worst}}$ & 49.37 & 49.97 & - & - & 49.73 & 44.77 & 46.03 & 48.97 & 50.19 & \underline{50.70} & \cellcolor{lightgreen!50}\textbf{51.41} \\
\hline
\multirow{3}{*}{\textbf{Overall Avg.}} 
 & $D_{S + 1}$ & \cellcolor{lightcyan!50}73.88 & \cellcolor{lightcyan!50}72.01 & \cellcolor{lightcyan!50}- & \cellcolor{lightcyan!50}- & \cellcolor{lightcyan!50}74.79 & \cellcolor{lightcyan!50}69.34 & \cellcolor{lightcyan!50}64.77 & \cellcolor{lightcyan!50}74.29 & \cellcolor{lightcyan!50}\underline{75.32} & \cellcolor{lightcyan!50}73.67 & \cellcolor{lightgreen!50}\textbf{76.26} \\
 & $\text{OOD}_{\text{avg.}}$ & \cellcolor{lightcyan!50}71.23 & \cellcolor{lightcyan!50}69.55 & \cellcolor{lightcyan!50}- & \cellcolor{lightcyan!50}- & \cellcolor{lightcyan!50}71.88 & \cellcolor{lightcyan!50}66.48 & \cellcolor{lightcyan!50}61.34 & \cellcolor{lightcyan!50}70.99 & \cellcolor{lightcyan!50}\underline{71.81} & \cellcolor{lightcyan!50}70.07 & \cellcolor{lightgreen!50}\textbf{72.45} \\
 & $\text{OOD}_{\text{worst}}$ & \cellcolor{lightcyan!50}67.59 & \cellcolor{lightcyan!50}65.46 & \cellcolor{lightcyan!50}- & \cellcolor{lightcyan!50}- & \cellcolor{lightcyan!50}67.84 & \cellcolor{lightcyan!50}63.40 & \cellcolor{lightcyan!50}57.09 & \cellcolor{lightcyan!50}67.94 & \cellcolor{lightcyan!50}\underline{68.63} & \cellcolor{lightcyan!50}66.67 & \cellcolor{lightgreen!50}\textbf{69.27} \\
\hline\hline
\end{tabular}
}
\vspace{-2mm}
\caption{Accuracy (\%) under CDGTD setting. Baselines include: ERM (IncERM), Mixup~\cite{zhang2018mixup}, SimCLR~\cite{chen2020simple}, SwAV~\cite{caron2020unsupervised}, EWC~\cite{kirkpatrick2017overcoming}, SI~\cite{Zenke2017ContinualLT}, A-GEM~\cite{Chaudhry2018EfficientLL}, DRAIN~\cite{Bai2022TemporalDG}, EvoS~\cite{xie2024evolving}, and W-Diff~\cite{xie2024weight}. Best and second-best results are \textbf{bolded} and \underline{underlined}. For FMoW, CLEAR-10/100, Huffpost and Arxiv, DRAIN is only applied to classifiers due to backbone size limitations. SimCLR and SwAV only apply to image benchmarks. ${\S}$ indicates TDG baselines.}
\label{tab:cdgtd_results}
\vspace{-4mm}
\end{table*}
\begin{table*}[t]
\small
\centering
\begin{tabular}{l|c|c|c|c|c|c|c|c|c}
\hline\hline\rule{0pt}{2ex}
\textbf{Method} & \textbf{Yearbook} & \textbf{RMNIST} & \textbf{C10} & \textbf{C100} & \textbf{FMoW} & \textbf{HuffPost} & \textbf{Arxiv} & \textbf{Overall} & \textbf{Rel. Cost} \\
\hline
\multicolumn{10}{c}{\cellcolor{gray!15} TDG setting}\\
ERM          & 0.02 & 0.02 & 0.30 & 1.58 & 2.34 & 3.05 & 7.92 & 2.18 & 1.00 \\
GI           & 0.21 & 1.31 & 0.32$^{*}$ & 3.54$^{*}$ & 5.35$^{*}$ & 3.87$^{*}$ & 9.75$^{*}$ & 3.48 & 12.01 \\
LSSAE        & 0.19 & 0.22 & 2.19 & 9.43 & 12.05 & - & - & - & 7.78 \\
TEA          & 0.04 & 0.04 & 0.33 & 1.62 & 2.43 & 3.23 & 8.57 & 2.32 & 1.33 \\
\multicolumn{10}{c}{\cellcolor{gray!15} CDGTD setting}\\
IncERM  & 0.02 & 0.02 & 0.30 & 1.58 & 2.34 & 3.03 & 7.95 & 2.18 & 1.00 \\
DRAIN        & 0.05 & 0.13 & 0.33$^{\dagger}$ & 1.75$^{\dagger}$ & 2.45$^{\dagger}$ & 3.07$^{\dagger}$ & 8.86 & 2.38 & 2.05 \\
EvoS         & 0.07 & 0.07 & 0.38 & 1.67 & 2.56 & 3.08 & 9.04 & 2.41 & 1.80 \\
W-Diff       & 3.12 & 6.74 & 3.47 & 32.35 & 65.31 & 13.18 & 77.93 & 28.87 & 81.01 \\
TEA          & 0.04 & 0.04 & 0.32 & 1.64 & 2.46 & 3.19 & 8.65 & 2.33 & 1.33 \\
\hline\hline
\end{tabular}

\vspace{-2mm}
\caption{Training cost (hours on A40 GPU) for each method. Rel. Cost is the computational cost ratio vs. ERM/IncERM, averaged across all tasks. C10 and C100 refer to CLEAR-10 and CLEAR-100~\cite{Lin2022TheCB} respectively. $^{*}$GI without finetuning. $^{\dagger}$Classifier-only DRAIN.}
\label{tab:training_cost}

\vspace{-2mm}
\end{table*}
\begin{table*}[t]
\centering
\small
\begin{tabular}{lcccccccc}
\hline\hline\rule{0pt}{2ex}
\footnotesize \textbf{Configuration} & 
\footnotesize \textbf{Yearbook} & 
\footnotesize \textbf{RMNIST} & 
\footnotesize \textbf{FMoW} & 
\footnotesize \textbf{C10} & 
\footnotesize \textbf{C100} & 
\footnotesize \textbf{Huffpost} & 
\footnotesize \textbf{Arxiv} & 
\footnotesize \textbf{Overall} \\
\hline
\multicolumn{9}{l}{\textbf{Single Model}} \\
\ -\ ERM & 88.46 & 92.14 & 59.76 & 81.20 & 63.19 & 71.50 & 52.38 & 72.66 \\
\ -\ \textit{Random Expert} & 87.46 & 82.29 & 59.37 & 81.63 & 63.26 & 71.34 & 52.05 & 71.06 \\
\ -\ \textit{Last Expert} & 95.42 & 92.17 & 60.49 & 81.53 & 63.16 & 71.33 & 54.57 & 74.10 \\
\multicolumn{9}{l}{\textbf{Weight Averaging}} \\
\ -\ \textit{Only Temporal Experts} & 95.41 & 92.64 & 60.54 & 82.12 & 66.32 & 71.43 & 53.79 & 74.61 \\
\ -\ \textit{Only Adaptive Averaging} & 94.03 & 92.92 & 60.83 & 83.07 & 66.85 & 71.73 & 53.26 & 74.67 \\
\textbf{Full TEA (ours)} & \textbf{95.95} & \textbf{94.47} & \textbf{62.45} & \textbf{83.16} & \textbf{66.96} & \textbf{72.12} & \textbf{55.23} & \textbf{75.76} \\
\hline\hline
\end{tabular}

\vspace{-2mm}
\caption{Ablation study of TEA components under the TDG setting with OOD average accuracy (\%). C10 and C100 refer to CLEAR-10 and CLEAR-100~\cite{Lin2022TheCB} respectively.}
\label{tab:tea_component_ab}
\vspace{-4mm}
\end{table*}

\begin{table}[t]
\centering
\small
\begin{tabular}{lcccc}
\hline\hline
\rule{0pt}{2ex}\footnotesize \textbf{Coeffs} & \footnotesize \textbf{Yearbook} & \footnotesize \textbf{RMNIST} & \footnotesize \textbf{Arxiv} & \footnotesize \textbf{Overall} \\

\midrule
\textcolor{gray}{ERM}     & \textcolor{gray}{88.46} & \textcolor{gray}{92.14} & \textcolor{gray}{52.38} & \textcolor{gray}{72.66} \\
Correct   & 95.95 & 94.47 & 55.23 & 75.76 \\
Reversed  & 82.95 & 77.03 & 50.13 & 70.05 \\
\hline\hline
\end{tabular}
\vspace{-2mm}
\caption{Sanity check with correct and reversed coefficients. Overall is averaged across the 7 benchmarks.}
\label{tab:temporal_sensitivity}
\vspace{-5mm}
\end{table}

\section{Experimental Results}
\label{sec:exp}

We first introduce the major experimental setups, with detailed configurations provided in Appendix~\ref{sec:exp_details}. For fair and consistent comparisons, we follow the configurations from \citet{xie2024evolving,xie2024weight} for all benchmarks, except for CLEAR-10\&100 which are not covered in these works.

\smallskip
\noindent\textbf{Benchmarks.} We include Huffpost, Arxiv, Yearbook and FMoW from \citet{yao2022wild}, CLEAR-10/100 from \citet{Lin2022TheCB}, and Rotated MNIST (RMNIST). Huffpost and Arxiv are text benchmarks; others are image benchmarks. RMNIST and Yearbook are small-scale; others are large in comparison. Each dataset is divided into first $S$ source and last $F$ target domains with ratios $S:F$ of: Yearbook (16:5), RMNIST (6:3), FMoW (13:3), Huffpost (4:3), Arxiv (9:7), and CLEAR-10/100 (5:5). Each source domain uses a random 90\%-10\% train-validation split. 

\smallskip
\noindent\textbf{Model Architectures.} We use: 4-layer CNN for Yearbook, ConvNet~\citep{qin2022generalizing} for RMNIST, DenseNet-121~\citep{huang2017densely} for FMoW, DistilBERT~\citep{sanh2019distilbert} for Huffpost/Arxiv, and ResNet-18/50~\citep{he2016deep} for CLEAR-10/100.

\smallskip
\noindent\textbf{Baselines:} For TDG, we evaluate against ERM, IRM~\cite{arjovsky2019invariant}, CORAL~\cite{Sun2016DeepCC}, Mixup~\cite{zhang2018mixup}, LISA~\cite{yao2022improving}, GI~\cite{nasery2021training}, LSSAE~\cite{qin2022generalizing}, SWAD~\cite{cha2021swad}, and DiWA~\cite{rame2022diwa}, where GI and LSSAE are representative TDG methods and SWAD and DiWA are representative weight averaging approaches. For CDGTD, we include Incremental ERM (IncERM), Mixup~\cite{zhang2018mixup}, SimCLR~\cite{chen2020simple}, SwAV~\cite{caron2020unsupervised}, EWC~\cite{kirkpatrick2017overcoming}, SI~\cite{Zenke2017ContinualLT}, A-GEM~\cite{Chaudhry2018EfficientLL}, DRAIN~\cite{Bai2022TemporalDG}, EvoS~\cite{xie2024evolving}, and W-Diff~\cite{xie2024weight}, with EvoS and W-Diff being state-of-the-art CDGTD methods. Due to computational constraints (\eg GI finetuning costs 400 GPU hours per epoch), full GI and DRAIN are applied only on Yearbook and RMNIST. For larger benchmarks, we use GI without finetuning and apply DRAIN only to the classifier.

\smallskip
\noindent \textbf{Method Configurations.} TEA maintain equivalent total training steps (e.g., 25 baseline epochs = 20 pretraining + 5 finetuning for TEA). Other TEA and baseline details are in Appendix~\ref{sec:method_config}.

\subsection{Results}
\label{sec:main_res}

\textbf{TDG setting} results and comparisons are presented in Table~\ref{tab:tdg_results}. Our TEA outperforms all baseline methods on both image and text benchmarks. Specifically, we observe: a). Prior TDG baselines (GI~\cite{nasery2021training} and LSSAE~\cite{qin2022generalizing}) perform well on small-scale benchmarks (RMNIST and Yearbook~\cite{yao2022wild}) but degrade significantly on other large-scale benchmarks~\cite{Lin2022TheCB,yao2022wild}. While GI's poor performance potentially stems from computational constraints preventing finetuning stage, LSSAE was fully applied, indicating that prior TDG methods also struggle to model temporal distribution shifts on large-scale tasks beyond computational limitations. In contrast, TEA consistently improves performance across all scales, outperforming GI by up to 30\% and LSSAE by up to 69\%; b). TEA also consistently outperforms weight averaging methods (DiWA~\cite{rame2022diwa} and SWAD~\cite{cha2021swad}), validating that our approach not only benefits from sampling experts with functional diversity and parameter similarity but further leverages adaptive averaging coefficients to specifically address temporal shifts, thereby enhancing temporal generalization beyond standard weight averaging techniques.

\smallskip
\noindent\textbf{CDGTD setting} results and comparisons are presented in Table~\ref{tab:cdgtd_results}. Our TEA still achieves the best performance on average, outperforming state-of-the-art CDGTD baselines, EvoS~\cite{xie2024evolving} and W-Diff~\cite{xie2024weight}. On text benchmarks, our TEA consistently performs the best, while on image benchmarks, although different benchmarks favor different methods, our TEA generally ranks within the top two. These results demonstrate the superiority and flexibility of TEA, showing that TEA can effectively improve temporal generalization even under imited data access constraints.

\smallskip
\noindent\textbf{Training Cost Analysis} is presented in Table~\ref{tab:training_cost}. Early TDG methods (GI~\cite{nasery2021training}, LSSAE~\cite{qin2022generalizing}, and DRAIN~\cite{Bai2022TemporalDG}) significantly increase training costs (see Yearbook and RMNIST for full costs). Even classifier-only W-Diff averages 81× the training cost. In contrast, our TEA only slightly increases cost by 33\% over ERM in both TDG and CDGTD, being up to 60x more efficient.

\subsection{Ablation Study and Analysis}
\label{sec:ablation_study}

\textbf{Single Model Ablation} results are shown in Table~\ref{tab:tea_component_ab}. The \textit{Random Expert} average accuracies from randomly selected temporal experts, while \textit{Last Expert} shows accuracies from the last domain experts. \textit{Random Expert} performs worse than ERM, indicating that our method does not simply improve domain-agnostic convergence during finetuning. \textit{Last Expert} outperforms \textit{Random Expert}, demonstrating that our temporal finetuning enables the model to learn domain-specific distributions, achieving functional diversity among experts.

\smallskip
\noindent\textbf{Weight Averaging Ablation} are shown in Table~\ref{tab:tea_component_ab}. Recall that TEA optimizes two tradeoffs: (1) functional diversity vs.\ parameter similarity (with temporal experts), and (2) bias vs.\ variance (with adaptive averaging). \textit{Only Temporal Experts} uses uniform coefficients to average experts, optimizing only tradeoff 1, while \textit{Only Adaptive Averaging} samples domain-agnostic weights then trains Time2Vec~\cite{kazemi2019time2vec} for adaptive coefficients (detailed in Appendix~\ref{sec:ablation_details}), optimizing only tradeoff 2. Both variants underperform full TEA, validating the necessity of both design choices.

\begin{figure*}[t]
    \centering
    \begin{subfigure}[b]{0.38\textwidth}        \includegraphics[width=\linewidth]{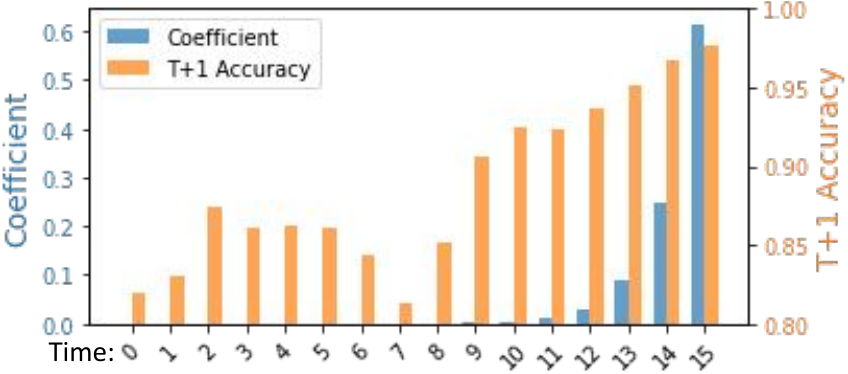}
        \caption{Yearbook.}
        \label{fig:yearbook_coeffs}
    \end{subfigure}
    \begin{subfigure}[b]{0.38\textwidth}
    \includegraphics[width=\linewidth]{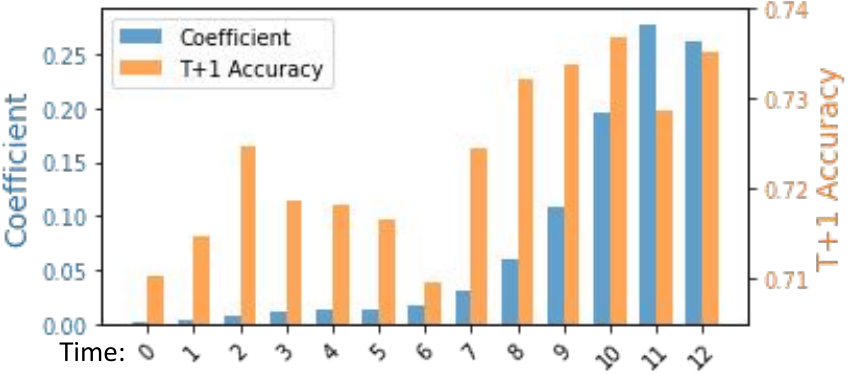}
        \caption{FMoW.}
        \label{fig:fmow_coeffs}
    \end{subfigure}
    \vspace{-2mm}
    \caption{Visualization of averaging coefficients and accuracies of experts on target domain $D_{S+1}$.}
    \label{fig:coeff_ablation}
    \vspace{-4mm}
\end{figure*}
\smallskip
\noindent\textbf{Temporal Sanity Check} are shown in Figure~\ref{fig:coeff_ablation} and Table~\ref{tab:temporal_sensitivity}. Our adaptive averaging should increase coefficients for better-performing experts on future domains while decreasing coefficients for poor performers. Figure~\ref{fig:coeff_ablation} confirms this design by showing higher coefficients for higher-performing models on domain $D_{S+1}$. Table~\ref{tab:temporal_sensitivity} validates our design by showing that reversing coefficient order leads to worse OOD accuracy than ERM.



\begin{table}[t]
\small
\centering
\begin{tabular}{l|c|cc}
\hline\hline
\rule{0pt}{2ex}\textbf{Strategy} & \textbf{Ratio} & \textbf{Yearbook} & \textbf{FMoW} \\
\hline

\multirow{3}{*}{Last} 
  & 25\%  & 95.45 & 60.88 \\
  & 50\%  & 95.34 & 61.66 \\
  & 75\%  & 95.46 & 62.26 \\
\hline
\multirow{3}{*}{Uniform} 
  & 25\%  & 95.18 & 60.76 \\
  & 50\%  & 95.35 & 61.26 \\
  & 75\%  & 95.50 & 61.69 \\
\hline\hline
\rule{0pt}{2ex}All     & 100\% & 95.95 & 62.45 \\
\hline\hline
\end{tabular}
\vspace{-2mm}
\caption{Ablation study on partial expert fine-tuning accuracy (\%) across domain selection strategies (Last, Uniform, All) at different ratios (25\%, 50\%, 75\%). We report $\text{OOD}_{\text{avg}}$ for each dataset. Fine-tuning on recent domains performs comparably to all domains.} 
\label{tab:tea_domain_ratios}
\vspace{-5mm}
\end{table}

\smallskip
\noindent\textbf{Partial Expert \& Fine-tuning Ablation.} We investigate TEA's scalability by fine-tuning on domain subsets using two strategies: Last (recent domains closest to future) and Uniform (uniform sampling). Table~\ref{tab:tea_domain_ratios} shows results with 25\%, 50\%, and 75\% domain ratios on Yearbook and FMoW~\citep{yao2022wild} under the TDG setting. Performance improves with more domains, and selecting recent domains significantly outperforms uniform selection with limited experts, confirming temporally closer domains are more crucial and suggesting fine-tuning only on recent domains could maintain performance while reducing costs.

\begin{table}[t]
\centering
\small
\begin{tabular}{c|c c c c}
\hline\hline
\rule{0pt}{2ex}\textbf{Buffer} & \textbf{RMNIST} & \textbf{Yearbook} & \textbf{FMoW} & \textbf{Arxiv} \\
\hline
1\%  & 92.99 & 92.78 & 53.20 & 52.17 \\
3\%  & 93.37 & 94.43 & 53.76 & 53.76 \\
5\%  & 93.51 & 95.15 & 54.63 & 54.10 \\
10\% & 93.77 & 95.20 & 55.21 & 54.20 \\
20\% & 92.71 & 95.47 & 55.37 & 54.43 \\
\hline\hline
\end{tabular}
\vspace{-2mm}
\caption{Ablation study on memory buffer size in CDGTD setting across different buffer sizes (1\%, 3\%, 5\%, 10\%, 20\%). $\text{OOD}_{\text{avg}}$ accuracy (\%) is reported. 5\%-10\% buffer sizes achieve good balance between memory efficiency and generalization performance.}
\label{tab:buffer_results}
\vspace{-2mm}
\end{table}

\smallskip
\noindent\textbf{Ablation Study on Memory Buffer Size.} We evaluate five buffer sizes (1\%, 3\%, 5\%, 10\%, and 20\%) in the CDGTD setting. Table~\ref{tab:buffer_results} shows that 1\% and 3\% buffer exhibits lower performance, while 5\% closely approaches 10\% performance. Expanding to 20\% yields only marginal improvements. Results demonstrate that 5\%--10\% buffer sizes provide optimal balance between memory efficiency and performance, justifying our 10\% choice.


\begin{table}[t]
\small
\centering
\begin{tabular}{c|c|c|c c}
\hline\hline
\rule{0pt}{2ex}\textbf{Method} & \textbf{Buffer} & \textbf{Domain} & \textbf{Yearbook} & \textbf{FMoW} \\
\hline
\rule{0pt}{2ex}IncERM & --   & --     & 94.72 & 53.99 \\
\hline
\multirow{3}{*}{TEA} 
  & 10\% & All    & 95.20 & 55.21 \\
  & 5\%  & Last 5 & 95.03 & 54.48 \\
  & 5\%  & Last 3 & 94.94 & 54.38 \\

\hline\hline
\end{tabular}
\vspace{-2mm}
\caption{Joint reduction of buffer and domain ablation in CDGTD setting. $\text{OOD}_{\text{avg}}$ accuracy (\%) is reported. 5\% buffer size with the last 3-5 domains achieves comparable performance to a 10\% buffer across all domains while providing up to 11$\times$ memory reduction.}
\label{tab:tea_buffer_results}
\vspace{-5mm}
\end{table}

\smallskip
\noindent\textbf{Joint Reduction of Buffer and Domain.} Table~\ref{tab:tea_buffer_results} shows that we can further compress total buffer size in CDGTD setting through joint buffer and domain reduction. 5\% buffer with last 3-5 domains achieves comparable performance to 10\% buffer across all domains with substantial generalization gains over IncERM, while reducing the buffer size requirements by up to 11×.


\begin{table}[t]
\small
\centering
\begin{tabular}{c c c c}
\hline\hline
\rule{0pt}{2ex}\textbf{Method} & \textbf{Distribution} & \textbf{Yearbook} & \textbf{FMoW} \\
\hline
ERM   & Both & 88.46 & 59.76 \\
DiWA  & Both & 95.36 & 60.77 \\
\hline
\rule{0pt}{2ex}TEA-S & Abrupt   & 95.47 & 60.85 \\
TEA   & Smooth & 95.95 & 62.45 \\
\hline\hline
\end{tabular}
\vspace{-2mm}
\caption{Ablation study on abrupt distribution shifts: shuffled vs. unshuffled source domains. $\text{OOD}_{\text{avg}}$ accuracy (\%) is reported. TEA maintains generalization gains despite abrupt distribution changes and remains comparable to strong DG baselines like DiWA. ERM and DiWA are unaffected by smooth vs. abrupt shifts.}
\label{tab:abrupt_shifts}
\vspace{-2mm}
\end{table}

\smallskip
\noindent\textbf{Abrupt and Unpredictable Shift Ablation.} Table~\ref{tab:abrupt_shifts} shows results using shuffled source domains to simulate abrupt distribution shifts on Yearbook and FMoW. Theoretically, under abrupt shifts, TDG degrades to standard DG where TEA's adaptive averaging gains should diminish while model averaging effects remain. Results align with theory: TEA with shuffled domains (TEA-S) performs worse than standard TEA but remains comparable with strong DG baseline DiWA, demonstrating robustness when temporal assumptions are violated.


\begin{table}[t]
\small
\centering
\begin{tabular}{c c c c}
\hline\hline
\rule{0pt}{2ex}\textbf{Setting} & \textbf{Order} & \textbf{Yearbook} & \textbf{FMoW} \\
\hline
\multirow{2}{*}{TDG} 
\rule{0pt}{2ex}  & Chronological & 95.46 & 61.31 \\
  & Reverse       & 95.95 & 62.45 \\
\hline
\multirow{2}{*}{CDGTD} 
  & Chronological & 95.17 & 54.68 \\
  & Reverse       & 95.29 & 55.21 \\
\hline\hline
\end{tabular}
\vspace{-2mm}
\caption{Ablation study on fine-tuning order comparing chronological vs. reverse strategies. $\text{OOD}_{\text{avg}}$ accuracy (\%) is reported. Reverse fine-tuning consistently outperforms across TDG and CDGTD.}
\label{tab:finetuning_order}
\vspace{-5mm}
\end{table}

\smallskip
\noindent\textbf{Fine-tuning Order Ablation.} We compare chronological and reverse fine-tuning orders in both TDG and CDGTD settings. Table~\ref{tab:finetuning_order} shows that reverse fine-tuning consistently outperforms chronological fine-tuning by up to 2\% on Yearbook and FMoW. The advantage becomes more pronounced on large-scale benchmarks like FMoW, justifying our reverse fine-tuning strategy where experts are trained from most recent to earliest domains.
\section{Conclusion}
\label{sec:conclusion}
In this work, we propose Temporal Expert Averaging (TEA), an efficient weight averaging framework for scaled-up Temporal Domain Generalization (TDG) tasks.
Based on theoretical insights, TEA uses constrained temporal finetuning to create functionally diverse yet parameter-similar experts, then adaptively averages them using coefficients derived from temporal dynamics of weight deviation principal components. Comprehensive evaluation demonstrates TEA's superior performance and efficiency across TDG and CDGTD settings. Since prior TDG work focuses on small-scale scenarios, we hope this encourages research on large-scale temporal generalization.
\smallskip

\noindent\textbf{Acknowledgments.} This material is based upon work supported, in part, by a grant from Meta.

\section{Limitations} 

Like prior TDG methods, our TEA relies on smooth distribution shift assumptions and cannot guarantee performance with abrupt shifts. Additionally, since most large-scale TDG benchmarks use discrete domains, we only explored discrete settings, though TEA could theoretically extend to Continuous Temporal Domain Generalization (CTDG).

However, the smooth distribution shift assumption already provides TEA and TDG methods with broad application potential, as many real-world phenomena exhibit gradual temporal changes, including language semantic evolution, object appearance variations, and fashion trend shifts. On the other hand, when this assumption is violated, the task setting reverts to standard Domain Generalization (DG), and our ablation studies show that TEA maintains considerable generalization gains even under abrupt distribution shifts, demonstrating robustness beyond its theoretical assumptions.

TEA can theoretically accommodate CTDG settings by utilizing continuous timestamps rather than discrete temporal labels. However, we are constrained by the lack of large-scale CTDG datasets, as existing continuous temporal datasets remain small-scale while our focus is on scaling up TDG for large-scale scenarios. We will explore this direction in future work when large-scale continuous temporal datasets become available.

\bibliography{custom}

\newpage
\appendix

\section{Experimental Setup Details}
\label{sec:exp_details}
\subsection{Benchmark Introduction}

\noindent\textbf{Huffpost}~\cite{ginosar2015century} is a text classification benchmark comprising news headlines from The Huffington Post spanning 2012-2018. The task requires classifying headlines into 11 news categories: ``Black Voices'', ``Business'', ``Comedy'', ``Crime'', ``Entertainment'', ``Impact'', ``Queer Voices'', ``Science'', ``Sports'', ``Tech'', and ``Travel''. This temporal dataset captures evolving journalistic styles and content trends in digital media over six years. We adopt a temporal split using the first 4 years as training domains and the final 3 years as test domains for evaluating temporal generalization. Sample distributions across domains are detailed in Table~\ref{tab:huffpost_stat}.

\noindent\textbf{Arxiv}~\cite{ginosar2015century} is a text classification benchmark containing paper titles and their corresponding primary categories spanning 2007-2022. The task requires classifying research papers into one of 172 categories based solely on their titles. This temporal dataset reflects the dynamic evolution of research fields, with changing academic trends and emerging disciplines captured across the 16-year timespan. We adopt a temporal split using the first 9 years as training domains and the final 7 years as test domains for evaluating temporal generalization. Sample distributions across domains are presented in Table~\ref{tab:arxiv_subset}.

\smallskip
\noindent\textbf{Yearbook} dataset, sourced from \citet{yao2022wild} and built upon the MIT-licensed Portraits dataset~\cite{ginosar2015century}, comprises 32×32 grayscale yearbook portraits from 128 American high schools across 27 states. Spanning eight decades (1930-2013), this temporal dataset captures the evolution of fashion trends and societal changes, making it particularly suitable for evaluating algorithmic performance on temporal domain shift. We formulate the task as binary gender classification, partitioning the timeline into 4-year intervals to create 21 distinct domains. Following standard practice, we allocate the initial 16 domains for training and reserve the final 5 domains for out-of-domain evaluation. Sample distributions across domains are detailed in Table~\ref{tab:yearbook_stat}.

\smallskip
\noindent\textbf{Rotated MNIST} (RMNIST) derives from the classic MNIST dataset~\cite{Deng2012TheMD} by systematically applying rotational transformations from 0° to 80° in 10° increments, creating 9 sequential domains that simulate temporal distribution shift. This benchmark evaluates 10-class digit classification performance on 28×28 grayscale images under gradually increasing rotational distortion. We adopt a 6-3 domain split, utilizing the initial six domains for model training and evaluating generalization on the final three domains.

\smallskip
\noindent\textbf{FMoW}~\cite{ginosar2015century} contains 224×224 RGB satellite imagery spanning 2002-2017 across 200 countries. This temporal benchmark captures natural evolution in visual features driven by human development and environmental changes over time. The classification task involves predicting functional land use across 62 categories, ranging from residential areas to industrial facilities. We partition the dataset temporally with each year constituting a distinct domain, yielding 16 total domains. Training utilizes the first 13 domains, while the final 3 domains serve as out-of-distribution test sets. Domain-wise sample distributions are provided in Table~\ref{tab:fmow_stat}.

\smallskip
\noindent\textbf{CLEAR-10\&100}~\cite{Lin2022TheCB} contain user-uploaded images from 2007-2014 with natural temporal shifts of visual concepts. Samples are organized into 10 chronologically ordered domains. CLEAR-10 comprises 10 classes with 3,000 samples per domain (300 per class), while CLEAR-100 contains 100 classes with 10,000 samples per domain (100 per class). We set the image input shape as $(224, 224, 3)$ and use the first 5 domains as source domains and the final 5 domains as target domains for temporal generalization evaluation.

\begin{table}[h]
\centering
\resizebox{0.48\textwidth}{!}{
\begin{tabular}{ccccc}
\toprule
\textbf{Domain} & \textbf{Year} & \textbf{Training Split} & \textbf{Validation Split} & \textbf{All} \\
\midrule
1 & 2012 & 6701  & 744  & 7446 \\
2 & 2013 & 7492  & 832  & 8325 \\
3 & 2014 & 9539  & 1059 & 10599 \\
4 & 2015 & 11826 & 1313 & 13140 \\
5 & 2016 & 10548 & 1172 & 11721 \\
6 & 2017 & 7907  & 878  & 8786 \\
7 & 2018 & 3501  & 388  & 3890 \\
\midrule
\textbf{Total} & 2012--2018 & \textbf{57514} & \textbf{6386} & \textbf{63907} \\
\bottomrule
\end{tabular}
}
\caption{Domain Sizes for Huffpost~\cite{yao2022wild}}
\label{tab:huffpost_stat}
\end{table}

\begin{table}[h]
\centering
\resizebox{0.48\textwidth}{!}{
\begin{tabular}{ccccc}
\toprule
\textbf{Domain} & \textbf{Year} & \textbf{Training Split} & \textbf{Validation Split} & \textbf{All} \\
\midrule
1  & 2007 & 131550 & 14616  & 146167 \\
2  & 2008 & 62460  & 6939   & 69400 \\
3  & 2009 & 206244 & 22916  & 229161 \\
4  & 2010 & 50665  & 5629   & 56295 \\
5  & 2011 & 55741  & 6193   & 61935 \\
6  & 2012 & 51678  & 5741   & 57420 \\
7  & 2013 & 64951  & 7216   & 72168 \\
8  & 2014 & 79498  & 8833   & 88332 \\
9  & 2015 & 193979 & 21553  & 215533 \\
10 & 2016 & 120682 & 13409  & 134092 \\
11 & 2017 & 111024 & 12336  & 123361 \\
12 & 2018 & 123891 & 13765  & 137657 \\
13 & 2019 & 142767 & 15862  & 158630 \\
14 & 2020 & 166014 & 18445  & 184460 \\
15 & 2021 & 201241 & 22360  & 223602 \\
16 & 2022 & 89765  & 9973   & 99739 \\
\midrule
\textbf{Total} & 2007--2022 & \textbf{1852150} & \textbf{205786} & \textbf{2057952} \\
\bottomrule
\end{tabular}
}
\caption{Domain Size for Arxiv~\cite{yao2022wild}}
\label{tab:arxiv_subset}
\end{table}

\begin{table}[h]
\centering
\resizebox{0.48\textwidth}{!}{
\begin{tabular}{cccccc}
\toprule
\textbf{Domain} & \textbf{Interval} & \textbf{Training Split} & \textbf{Validation Split} & \textbf{All} \\
\midrule
1  & 1930 -- 1933 & 758  & 87  & 845 \\
2  & 1934 -- 1937 & 1149 & 130 & 1279 \\
3  & 1938 -- 1941 & 949  & 108 & 1057 \\
4  & 1942 -- 1945 & 2353 & 263 & 2616 \\
5  & 1946 -- 1949 & 1229 & 138 & 1367 \\
6  & 1950 -- 1953 & 1082 & 122 & 1204 \\
7  & 1954 -- 1957 & 1646 & 185 & 1831 \\
8  & 1958 -- 1961 & 1295 & 146 & 1441 \\
9  & 1962 -- 1965 & 1468 & 166 & 1634 \\
10 & 1966 -- 1969 & 2227 & 249 & 2476 \\
11 & 1970 -- 1973 & 1634 & 183 & 1817 \\
12 & 1974 -- 1977 & 2238 & 250 & 2488 \\
13 & 1978 -- 1981 & 1553 & 175 & 1728 \\
14 & 1982 -- 1985 & 2331 & 261 & 2592 \\
15 & 1986 -- 1989 & 1792 & 201 & 1993 \\
16 & 1990 -- 1993 & 1729 & 195 & 1924 \\
17 & 1994 -- 1997 & 1882 & 211 & 2093 \\
18 & 1998 -- 2001 & 2136 & 239 & 2375 \\
19 & 2002 -- 2005 & 1868 & 210 & 2078 \\
20 & 2006 -- 2009 & 1010 & 114 & 1124 \\
21 & 2010 -- 2013 & 1102 & 125 & 1227 \\
\midrule
\textbf{Total} & 1930 -- 2013 & \textbf{33431} & \textbf{3758} & \textbf{37189} \\
\bottomrule
\end{tabular}
}
\caption{Domain Sizes for Yearbook~\cite{yao2022wild}}
\label{tab:yearbook_stat}
\end{table}

\begin{table}[h]
\centering
\resizebox{0.48\textwidth}{!}{
\begin{tabular}{ccccc}
\toprule
\textbf{Domain} & \textbf{Year} & \textbf{Training Split} & \textbf{Validation Split} & \textbf{All} \\
\midrule
1  & 2002 & 1676  & 227  & 1903 \\
2  & 2003 & 2279  & 276  & 2555 \\
3  & 2004 & 1755  & 240  & 1995 \\
4  & 2005 & 2512  & 324  & 2836 \\
5  & 2006 & 3155  & 406  & 3561 \\
6  & 2007 & 1497  & 190  & 1687 \\
7  & 2008 & 2261  & 298  & 2559 \\
8  & 2009 & 7439  & 935  & 8374 \\
9  & 2010 & 18957 & 2456 & 21413 \\
10 & 2011 & 22111 & 2837 & 24948 \\
11 & 2012 & 24704 & 3138 & 27842 \\
12 & 2013 & 3465  & 385  & 3850 \\
13 & 2014 & 5572  & 620  & 6192 \\
14 & 2015 & 8885  & 988  & 9873 \\
15 & 2016 & 14363 & 1596 & 15959 \\
16 & 2017 & 5534  & 615  & 6149 \\
\midrule
\textbf{Total} & 2002--2017 & \textbf{126165} & \textbf{15531} & \textbf{141696} \\
\bottomrule
\end{tabular}
}
\caption{Domain Sizes for FMoW~\cite{yao2022wild}}
\label{tab:fmow_stat}
\end{table}

\subsection{Method Configurations}
\label{sec:method_config}

\noindent\textbf{Huffpost}~\cite{yao2022wild} uses pretrained DistilBERT base model~\cite{sanh2019distilbert} as the backbone. All baseline methods are trained on 90\% randomly split training data from source domains for 50 epochs with learning rate 2e-5 (except A-GEM which uses 1e-7). Other baseline configurations follow~\citet{xie2024evolving,xie2024weight}.

\smallskip
\noindent\textbf{TEA for Huffpost} uses the same DistilBERT backbone. Under TDG setting, TEA first trains on all source domain training splits using ERM for 45 epochs with learning rate 2e-5 during the pretraining stage, then performs temporal finetuning for 5 epochs on each domain in reverse temporal order (from 2015 to 2012) using SI with learning rate 5e-6 and constraint strength $c_{si} = 0.1$. Under CDGTD setting, we adopt 47-epoch incremental ERM training on each domain (from 2012 to 2015) with learning rate 2e-5, followed by 30 temporal finetuning epochs on each domain in reverse temporal order (from 2015 to 2012) using SI with learning rate 5e-6 and constraint strength $c_{si} = 0.1$. Note that temporal finetuning under CDGTD uses only 10\% of the data, so the total training cost remains 47+30×0.1=50 epochs. During temporal finetuning on each domain, we sample model weights at $K=5$ evenly spaced training steps and uniformly average them to obtain expert model weights. For PCA on expert deviations, we use the top 10 principal components. For ARIMA estimation, we employ an ARIMA(1,1,1) model. When computing averaging coefficients, we set the sharpness hyperparameter $r=5$.

\noindent\textbf{Arxiv}~\cite{yao2022wild} also uses pretrained DistilBERT base model~\cite{sanh2019distilbert} as the backbone. All baseline methods are trained on 90\% randomly split training data from source domains for 5 epochs with learning rate 2e-5 (except A-GEM which uses 1e-6). Other baseline configurations follow~\citet{xie2024evolving,xie2024weight}.

\smallskip
\noindent\textbf{TEA for Arxiv} uses the same DistilBERT backbone. Under TDG setting, TEA first trains on all source domain training splits using ERM for 4 epochs with learning rate 2e-5 during the pretraining stage, then performs temporal finetuning for 1 epoch on each domain in reverse temporal order (from 2015 to 2007) using SI with learning rate 5e-6 and constraint strength $c_{si} = 0.1$. Under CDGTD setting, we adopt 4-epoch incremental ERM training on each domain (from 2007 to 2015) with learning rate 2e-5, followed by 10 temporal finetuning epochs on each domain in reverse temporal order (from 2015 to 2007) using SI with learning rate 5e-6 and constraint strength $c_{si} = 0.1$. Note that temporal finetuning under CDGTD uses only 10\% of the data, so the total training cost remains 4+10×0.1=5 epochs. During temporal finetuning on each domain, we sample model weights at $K=5$ evenly spaced training steps and uniformly average them to obtain expert model weights. For PCA on expert deviations, we use the top 10 principal components. For ARIMA estimation, we employ an ARIMA(1,1,1) model. When computing averaging coefficients, we set the sharpness hyperparameter $r=5$.

\smallskip
\noindent\textbf{Yearbook}~\cite{yao2022wild} uses a 4-layer convolutional network from \citet{yao2022wild}. All baseline methods are trained on 90\% randomly split training data from source domains for 50 epochs with learning rate 1e-3. Other baseline configurations follow~\citet{xie2024evolving,xie2024weight}.

\smallskip
\noindent\textbf{TEA for Yearbook} uses the same 4-layer convolutional network from \citet{yao2022wild}. Under TDG setting, TEA first trains on all source domain training splits using ERM for 40 epochs with learning rate 1e-3 during the pretraining stage, then performs temporal finetuning for 10 epochs on each domain in reverse temporal order (from D$_{16}$ to D$_1$) using SI with learning rate 5e-4 and constraint strength $c_{si} = 0.1$. Under CDGTD setting, we adopt 48-epoch incremental ERM training on each domain (from D$_1$ to D$_{16}$) with learning rate 1e-3, followed by 20 temporal finetuning epochs on each domain in reverse temporal order (from D$_{16}$ to D$_1$) using SI with learning rate 5e-4 and constraint strength $c_{si} = 0.1$. Note that temporal finetuning under CDGTD uses only 10\% of the data, so the total training cost remains 48+20×0.1=50 epochs. During temporal finetuning on each domain, we sample model weights at $K=5$ evenly spaced training steps and uniformly average them to obtain expert model weights. For PCA on expert deviations, we use the top 10 principal components. For ARIMA estimation, we employ an ARIMA(1,1,1) model. When computing averaging coefficients, we set the sharpness hyperparameter $r=5$.

\smallskip
\noindent\textbf{RMNIST} adopts the ConvNet in \citet{qin2022generalizing}. All baseline methods are trained on 90\% randomly split training data from source domains for 50 epochs with learning rate 1e-3 (except A-GEM which uses 1e-5). Other baseline configurations follow~\citet{xie2024evolving,xie2024weight}.

\smallskip
\noindent\textbf{TEA for RMNIST} uses the same ConvNet. Under TDG setting, TEA first trains on all source domain training splits using ERM for 40 epochs with learning rate 1e-3 during the pretraining stage, then performs temporal finetuning for 10 epochs on each domain in reverse temporal order (from D$_{6}$ to D$_1$) using SI with learning rate 2e-4 and constraint strength $c_{si} = 0.1$. Under CDGTD setting, we adopt 48-epoch incremental ERM training on each domain (from D$_1$ to D$_{6}$) with learning rate 1e-3, followed by 20 temporal finetuning epochs on each domain in reverse temporal order (from D$_{6}$ to D$_1$) using SI with learning rate 2e-4 and constraint strength $c_{si} = 0.1$. Note that temporal finetuning under CDGTD uses only 10\% of the data, so the total training cost remains 48+20×0.1=50 epochs. During temporal finetuning on each domain, we sample model weights at $K=5$ evenly spaced training steps and uniformly average them to obtain expert model weights. For PCA on expert deviations, we use the top 10 principal components. For ARIMA estimation, we employ an ARIMA(1,1,1) model. When computing averaging coefficients, we set the sharpness hyperparameter $r=5$.

\smallskip
\noindent\textbf{FMoW}~\cite{yao2022wild} adopts a DenseNet-121~\cite{huang2017densely} backbone pretrained on ImageNet~\cite{deng2009imagenet}. All baseline methods are trained on 90\% randomly split training data from source domains for 25 epochs with learning rate 2e-4 (except A-GEM which uses 1e-6). Other baseline configurations follow~\citet{xie2024evolving,xie2024weight}.

\smallskip
\noindent\textbf{TEA for FMoW} uses the same DenseNet-121~\cite{huang2017densely}. Under TDG setting, TEA first trains on all source domain training splits using ERM for 20 epochs with learning rate 2e-4 during the pretraining stage, then performs temporal finetuning for 5 epochs on each domain in reverse temporal order (from D$_{13}$ to D$_1$) using SI with learning rate 7e-5 and constraint strength $c_{si} = 0.1$. Under CDGTD setting, we adopt 23-epoch incremental ERM training on each domain (from D$_1$ to D$_{13}$) with learning rate 2e-4, followed by 20 temporal finetuning epochs on each domain in reverse temporal order (from D$_{13}$ to D$_1$) using SI with learning rate 2e-5 and constraint strength $c_{si} = 0.1$. Note that temporal finetuning under CDGTD uses only 10\% of the data, so the total training cost remains 23+20×0.1=25 epochs. During temporal finetuning on each domain, we sample model weights at $K=5$ evenly spaced training steps and uniformly average them to obtain expert model weights. For PCA on expert deviations, we use the top 10 principal components. For ARIMA estimation, we employ an ARIMA(1,1,1) model. When computing averaging coefficients, we set the sharpness hyperparameter $r=1$.

\smallskip
\noindent\textbf{CLEAR-10}~\cite{Lin2022TheCB} adopts a ResNet-18~\cite{he2016deep}. All baseline methods are trained on 90\% randomly split training data from source domains for 50 epochs with batch size 128 and learning rate 1e-3 (except A-GEM which uses 1e-6). Other baseline configurations follow the FMoW configurations from \citet{xie2024evolving,xie2024weight}.

\smallskip
\noindent\textbf{TEA for CLEAR-10} uses the same ResNet-18~\cite{he2016deep}. Batch size is 128. Under TDG setting, TEA first trains on all source domain training splits using ERM for 45 epochs with learning rate 1e-3 during the pretraining stage, then performs temporal finetuning for 5 epochs on each domain in reverse temporal order (from D$_{5}$ to D$_1$) using SI with learning rate 1e-4 and constraint strength $c_{si} = 0.1$. Under CDGTD setting, we adopt 49-epoch incremental ERM training on each domain (from D$_1$ to D$_{5}$) with learning rate 1e-3, followed by 10 temporal finetuning epochs on each domain in reverse temporal order (from D$_{5}$ to D$_1$) using SI with learning rate 1e-4 and constraint strength $c_{si} = 0.1$. Note that temporal finetuning under CDGTD uses only 10\% of the data, so the total training cost remains 49+10×0.1=50 epochs. During temporal finetuning on each domain, we sample model weights at $K=5$ evenly spaced training steps and uniformly average them to obtain expert model weights. For PCA on expert deviations, we use the top 10 principal components. For ARIMA estimation, we employ an ARIMA(1,1,1) model. When computing averaging coefficients, we set the sharpness hyperparameter $r=0.5$.

\smallskip
\noindent\textbf{CLEAR-100}~\cite{Lin2022TheCB} adopts a ResNet-50~\cite{he2016deep}. All baseline methods are trained on 90\% randomly split training data from source domains for 50 epochs with batch size 128 and learning rate 5e-4 (except A-GEM which uses 1e-6). Other baseline configurations follow the FMoW configurations from \citet{xie2024evolving,xie2024weight}.

\smallskip
\noindent\textbf{TEA for CLEAR-100} uses the same ResNet-50~\cite{he2016deep}. Batch size is 128. Under TDG setting, TEA first trains on all source domain training splits using ERM for 45 epochs with learning rate 5e-4 during the pretraining stage, then performs temporal finetuning for 5 epochs on each domain in reverse temporal order (from D$_{5}$ to D$_1$) using SI with learning rate 1e-4 and constraint strength $c_{si} = 0.1$. Under CDGTD setting, we adopt 49-epoch incremental ERM training on each domain (from D$_1$ to D$_{5}$) with learning rate 5e-4, followed by 10 temporal finetuning epochs on each domain in reverse temporal order (from D$_{5}$ to D$_1$) using SI with learning rate 1e-4 and constraint strength $c_{si} = 0.1$. Note that temporal finetuning under CDGTD uses only 10\% of the data, so the total training cost remains 49+10×0.1=50 epochs. During temporal finetuning on each domain, we sample model weights at $K=5$ evenly spaced training steps and uniformly average them to obtain expert model weights. For PCA on expert deviations, we use the top 10 principal components. For ARIMA estimation, we employ an ARIMA(1,1,1) model. When computing averaging coefficients, we set the sharpness hyperparameter $r=0.5$.

\begin{figure*}[t]
    \centering
    \includegraphics[width=0.9\textwidth]{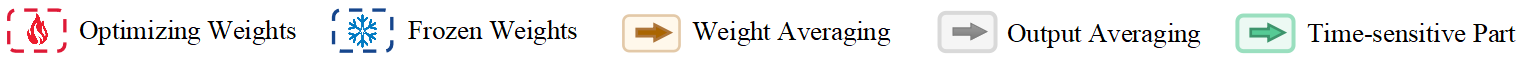}\\

    \begin{subfigure}{0.47\linewidth}
        \centering
        \includegraphics[width=0.99\linewidth]{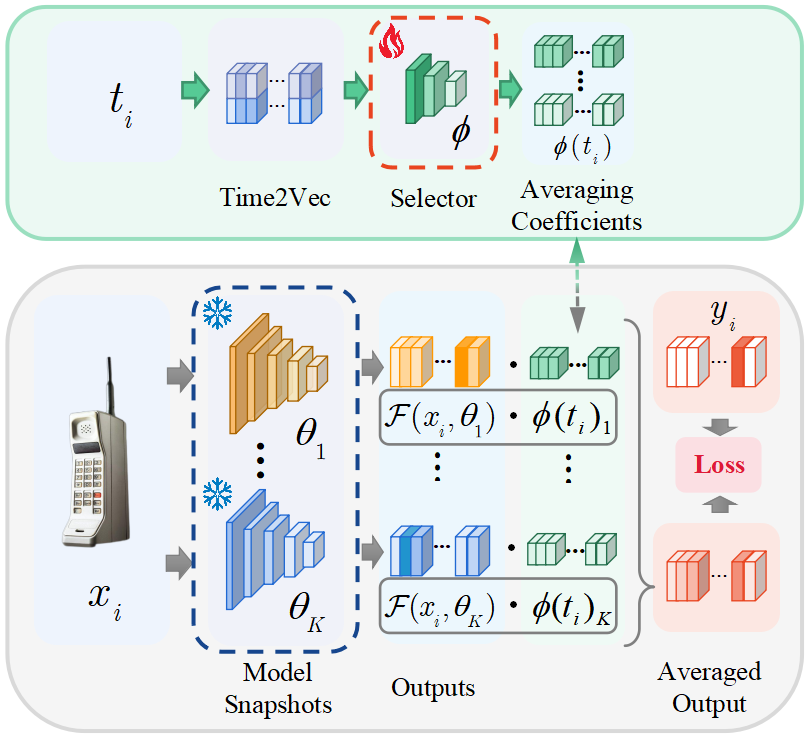}
        \caption{Training the Selector Network.}
        \label{fig:train_twa}
    \end{subfigure}
    \begin{subfigure}{0.47\linewidth}
        \centering
        \includegraphics[width=0.99\linewidth]{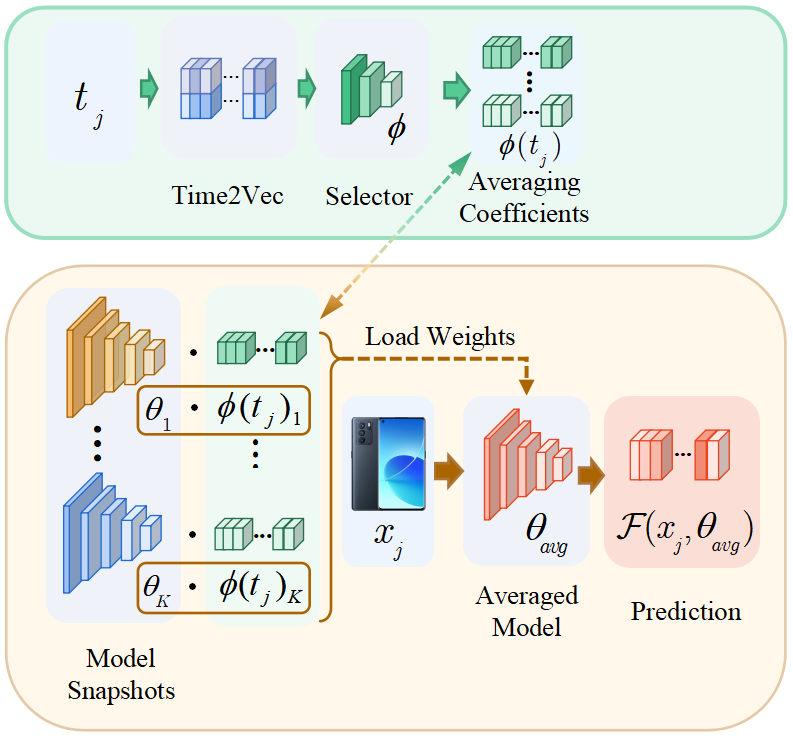}
        \caption{Inference.}
        \label{fig:infer_twa}
    \end{subfigure}
    
    \caption{An overview of our \textit{Only Adaptive Averaging} ablation. (a) When optimizing the selector network in \textit{Only Adaptive Averaging}, we use output averaging as a proxy task, utilizing the estimated coefficients to average the outputs of all snapshots. (b) During inference, we perform weight averaging with the optimized selector network. }
    \label{fig:twa_method}
    \vspace{-2.7mm}
\end{figure*}

\subsection{Ablation Details}
\label{sec:ablation_details}

Ablation study of TEA components examines four variants: Random Expert, Last Expert, Only Temporal Experts, and Only Adaptive Averaging. The first three involve simple modifications to specific TEA components, while Only Adaptive Averaging represents a more substantially different variant. We briefly describe the first three below and detail Only Adaptive Averaging in the following section:

\begin{itemize}
\item \textbf{Random Expert}: Randomly selects expert models and reports the average performance across multiple runs, which effectively equals the average performance of all experts.
\item \textbf{Last Expert}: Uses only the expert from the final domain.
\item \textbf{Only Temporal Experts}: Identical to TEA except for using uniform averaging coefficients ($1/S$) instead of adaptive coefficients to average all expert weights.
\end{itemize}

\noindent\textbf{Only Adaptive Averaging} shown in~\ref{fig:twa_method} aims to use base weights without temporal fine-tuning to achieve functional diversity, capturing temporal shift patterns solely through adaptive weight averaging in the coefficients. This variant cannot be implemented by simply removing TEA components, as our averaging coefficient estimation relies on shift patterns from experts corresponding to different temporal domains. Without temporal differences between base weights, we cannot use TEA's principal component trajectory-based coefficient estimation. Therefore, we adopt a training-based generation approach instead.

We first sample base weights. Following SWA~\cite{izmailov2018averaging}, we randomly sample S weights from the training process, which we call "snapshots". A key challenge arises with normalization layers: on TDG tasks, freezing normalization layers leads to underfitting, while optimizing them results in snapshots with different normalization parameters and statistics. Since weight averaging is highly sensitive to normalization differences, excessive variation causes poor performance in the averaged model. We address this using a "late sampling" strategy, as we observe that normalization becomes sufficiently good during intermediate training stages. Specifically, we freeze the normalization layers during the final epoch of each task and sample $K$ snapshots $\{\theta_k\}_{k=1}^K$ within this last epoch (noted as $K$ as we use all domain as a unified domain and set $K=S$ for fair ablation). We then generate adaptive averaging coefficients through a training-based approach. Specifically, we use a Time2Vec~\cite{kazemi2019time2vec} module with a 2-layer MLP as the selector network $\phi$ to generate averaging coefficients. After sampling the snapshots, we randomly select samples with timestamps from the training domains and train the selector network to combine the outputs of these snapshots. We formulate this training process as:
\adjustbox{width=0.49\textwidth}{%
$\displaystyle
\begin{aligned}
\label{eq:optim_meta}
\phi^{*} = & \mathop{\arg} \mathop{\min_{\phi}} \sum_{i \in [1, S]} \sum_{(x,t,y) \sim D_i}\ell \left( \sum_{k=1}^K \phi(t)_k \cdot f(x,  \theta_k), \: y \right)
\\
{\rm{s.t.}} & \quad \{\theta_k\}_{k=1}^K \sim  \mathcal{S}_{ls}(\mathop{\arg} \mathop{\min_{\theta}} \sum_{i \in [1, S]} \sum_{(X,\cdot,Y) \sim D_i} \ell(f(X,\theta),Y), 
\end{aligned}
$}
where $\mathcal{S}_{ls}$ is the snapshot sampling process with late sampling strategy. We use Adam optimizer for optimizing the selector network with learning rate as 1e-4, batch size as 1 and training steps as 2000.

After training $\phi^{*}$, we use it during inference to generate averaging coefficients for the $K=S$ snapshots: $\boldsymbol{\alpha}^{OAA} = \{\alpha^{OAA}_k\}_{k=1}^K = \phi^{*}(t_f)$. 
\section{Additional Discussion}

\textbf{TDG's Value for NLP Community.} On one hand, Temporal Domain Generalization (TDG)~\cite{OrtizJimnez2019CDOTCD, mancini2019adagraph, Wang2020ContinuouslyID, Bai2022TemporalDG, nasery2021training, Zeng2023ForeseeWY, wang2022evolving, xie2024evolving, xie2024enhancing, yong2023continuous, xie2024weight}  has broad application prospects in NLP tasks, as temporal distribution shifts are prevalent in NLP, such as lexical changes over time and evolving understanding of specific expressions (e.g., memes) across time periods. Particularly in the large language model era, TDG's low-resource generalization nature can reduce the expensive computational and data costs required for LLM retraining or fine-tuning. On the other hand, TDG has already been widely recognized as a valuable direction by the relevant community, with numerous papers published in top-tier conferences, including our baselines: GI (NeurIPS'21)~\cite{nasery2021training}, LSSAE (ICML'22)~\cite{qin2022generalizing}, DRAIN (ICLR'23)~\cite{Bai2022TemporalDG}, EvoS (NeurIPS'23)~\cite{xie2024evolving}, and W-Diff (NeurIPS'24)~\cite{xie2024weight}.

\smallskip
\noindent\textbf{Continual Learning.} TDG shares similar data configurations with continual learning~\cite{Zenke2017ContinualLT, LopezPaz2017GradientEM, Shin2017ContinualLW, Chaudhry2018EfficientLL}, and our main benchmarks ~\cite{yao2022wild,Lin2022TheCB}were originally introduced for continual learning. However, TDG and continual learning differ significantly in their objectives. Standard continual learning primarily focuses on the past, addressing whether learning new tasks causes catastrophic forgetting of previous knowledge. In contrast, TDG focuses on the future, concerned with leveraging past knowledge to enhance generalization to future domains. We incorporate representative continual learning baselines including EWC~\cite{kirkpatrick2017overcoming}, SI~\cite{Zenke2017ContinualLT}, and A-GEM~\cite{Chaudhry2018EfficientLL}, which show no significant generalization improvement on future domains.

\smallskip
\noindent\textbf{Continual Domain Generalization over Temporal Drift (CDGTD)} can be viewed as an intersection of standard TDG and continual learning. This represents a reasonable application direction, requiring models to both retain past knowledge and generalize well to future domains. However, this does not diminish the importance of standard TDG, as the core challenge of TDG—how to utilize temporal shift patterns in past data for better future generalization—is orthogonal to CDGTD's additional constraint of sequential domain access. Moreover, CDGTD may complicate the exploration of temporal generalization capabilities by introducing an additional variable. Therefore, we consider both standard TDG and CDGTD equally important, with no priority distinction.

\smallskip
\noindent\textbf{Large Language Models (LLMs).} While LLMs~\cite{OpenAI2023GPT4TR,Touvron2023LLaMAOA,guo2025deepseek} achieve good generalization through training on massive datasets, this does not conflict with TDG. TDG fundamentally targets low-resource scenarios and has considerable practical value when large training datasets are unavailable. Conversely, in cases of relatively smooth temporal distribution shifts, applying TDG with limited data is more data-efficient than brute-force generalization through massive training. Furthermore, regardless of how much data LLMs are trained on, TDG can be further applied to enhance temporal generalization capabilities. Notably, TDG application to LLMs is particularly promising as it can effectively reduce LLM training costs. However, TDG is still far from being applicable to LLMs, primarily due to scaling limitations. This highlights the value of our work as a solid step toward LLM-scale TDG.

\smallskip
\noindent\textbf{Temporal Reasoning}~\cite{xiong2024large,yuan2024back,fatemi2024test,chu2023timebench}. While this may sound related to TDG, the primary connection is that both contain "temporal" in their names. Temporal reasoning focuses on enabling models to understand explicit temporal relationships at the individual sample level, whereas TDG aims to adapt models to implicit temporal distribution shifts at the dataset level. Temporal reasoning could potentially improve TDG performance, but this remains unexplored.

\smallskip
\begin{table*}[t]
\centering
\resizebox{\textwidth}{!}{
\begin{tabular}{|c|c|c|c|c|c|c|c|}
\hline\hline
Method / Dataset & RMNIST & Yearbook & FMoW & HuffPost & Arxiv & CLEAR-10 & CLEAR-100 \\
\hline\hline
\#Samples   & 70,000   & 37,189   & 141,696   & 63,907   & 2,057,952   & 30,000   & 100,000 \\
\hline
Sample Size & 28$\times$28 & 32$\times$32 & 224$\times$224$\times$3 & - & - & 224$\times$224$\times$3 & 224$\times$224$\times$3 \\
\hline
\#Param     & 371.9K   & 29.0K   & 7.2M   & 66.4M   & 66.5M   & 11.2M   & 23.7M \\
\hline
GI~\cite{nasery2021training}          & \checkmark &   &   &   &   &   &   \\
LSSAE~\cite{qin2022generalizing}       & \checkmark & \checkmark &   &   &   &   &   \\
DRAIN~\cite{Bai2022TemporalDG}       & \checkmark & \checkmark &   &   &   &   &   \\
EvoS~\cite{xie2024evolving}        & \checkmark & \checkmark & \checkmark & \checkmark & \checkmark &   &   \\
W-Diff~\cite{xie2024weight}      & \checkmark & \checkmark & \checkmark & \checkmark & \checkmark &   &   \\
TEA (Ours)  & \checkmark & \checkmark & \checkmark & \checkmark & \checkmark & \checkmark & \checkmark \\
\hline\hline
\end{tabular}
}
\vspace{-2mm}
\caption{Dataset statistics and benchmark coverage across TDG methods. \checkmark\ indicates the method was originally explored and evaluated on the corresponding dataset in its original paper, demonstrating the evolution toward larger-scale TDG settings.}
\label{tab:dataset_methods}
\vspace{-2mm}
\end{table*}
\noindent\textbf{Scalability within TDG Domain.} Our scalability claims are contextualized within the TDG and CDGTD domains rather than general ML standards. Table~\ref{tab:dataset_methods} shows the evolution of benchmark coverage across TDG methods. Early methods like GI~\cite{nasery2021training}, LSSAE~\cite{qin2022generalizing}, and DRAIN~\cite{Bai2022TemporalDG} were limited to small benchmarks (RMNIST, Yearbook~\cite{yao2022wild}) with <40K samples and minimal parameters. Recent works like EvoS~\cite{xie2024evolving} and W-Diff~\cite{xie2024weight} expanded to medium-scale benchmarks (FMoW, HuffPost, Arxiv~\cite{yao2022wild}) with up to 2M samples and 66M parameters. TEA further extends this progression by introducing two new large benchmarks (CLEAR-10/100~\cite{Lin2022TheCB}) and demonstrating effectiveness across the full spectrum from 30K to 2M+ samples and 29K to 66M+ parameters. This represents significant scalability advancement within the TDG field, though the scale remains modest compared to contemporary NLP/CV standards.

\section{Theoretical Analysis}
\label{sec:theory}

\subsection{Notation}

We denote $\mathcal{X}$ the input space, $\mathcal{Y}$ the label space, and $\ell: \mathcal{Y}^2 \rightarrow \mathbb{R}^+$ a loss function. We have a sequence of domains $\{D_i\}$ indexed by timestamps $t_i \in \mathcal{T}$, where $\mathcal{T}$ is a totally ordered set representing time. Each domain $D_i$ has a distribution $p_i$. 
For the training (source) domains $\{D_i\}_{i=1}^{S}$, we have timestamps $t_1 < t_2 < \ldots < t_S$ in $\mathcal{T}$, and corresponding distributions $p_{1}, p_{2}, \ldots, p_{S}$. For simplicity, we will use $p_i$ to refer to the joint, posterior, and marginal distributions of $(X, Y)$ at time $t$.
We note $f_i: \mathcal{X} \rightarrow \mathcal{Y}$ as the labeling function at time $t_i$. We assume there is no noise in the data: $f_i$ is defined on $\mathcal{X}_i \triangleq \{x \in \mathcal{X} \mid p_i(x) > 0\}$ by $\forall(x, y) \sim p_i, f_i(x) = y$.

\subsection{Temporal Domain Generalization}

We consider a neural network (NN) $f(\cdot, \theta): \mathcal{X} \rightarrow \mathcal{Y}$ made of a fixed architecture $f$ with weights $\theta$. Given observations from source domains at times $t_1, t_2, \ldots, t_S$, we seek $\theta$ minimizing the target generalization error at a future time $t_f > t_S$:
\begin{equation}
\mathcal{E}_{f}(\theta) = \mathbb{E}_{(x,y)\sim p_{f}}[\ell(f(x, \theta), y)].
\end{equation}
$f(\cdot, \theta)$ should approximate $f_{f}$ on $\mathcal{X}_{f}$. This is challenging in the TDG setup because we only have data from earlier timestamps, which are related yet different from the future target domain.

The differences between domains at different timestamps are due to distribution shifts (i.e., the fact that $p_i(X, Y) \neq p_{j}(X, Y)$ for $i \neq j$), which can be decomposed into:
\begin{itemize}
    \item \textbf{Diversity shift:} when marginal distributions differ over time (i.e., $p_i(X) \neq p_{j}(X)$)
    \item \textbf{Correlation shift:} when posterior distributions differ over time (i.e., $p_i(Y|X) \neq p_{j}(Y|X)$ and $f_i \neq f_{j}$)
\end{itemize}

The weights are typically learned on source domain data $\{D_{1}, D_{2}, \ldots, D_{S}\}$ from timestamps $\{t_1, t_2, \ldots, t_S\}$ (each composed of $n_i$ i.i.d. samples from $p_i(X, Y)$) with a configuration $c$, which contains all other configurations and sources of randomness in learning. We call $l_\mathcal{T} = \{D_{1}, D_{2}, \ldots, D_{S}, c\}$ a learning procedure, and explicitly write $\theta(l_\mathcal{T})$ to refer to the weights obtained after stochastic minimization of the appropriate objective function. Specific to our TEA, we define $l_i = \{D_{1}, D_{2}, \ldots, D_{S}, t_i, c\}$ as a temporal expert learning procedure to get expert model $\theta_i=\theta(l_i)$ which is designed to excels on domain $D_{i}$ while also using data from other domains.

\subsection{Temporal Expert Averaging}

We study the benefits of combining $S$ individual member weights $\{\theta_i\}_{i=1}^S \triangleq \{\theta(l_{i})\}_{i=1}^S$ obtained from $S$ different domains at timestamps $\{t_1, t_2, \ldots, t_S\}$. Each weight $\theta_i$ corresponds to an expert model that is more proficient for domain $D_{i}$ (though not necessarily trained exclusively on that domain).

Unlike traditional weight averaging~\cite{cha2021swad,rame2022diwa,modelsoup} that uses equal coefficients, for temporal domain generalization, we propose a temporally-weighted averaging scheme that assigns different importance to experts based on their relevance to the target future domain. 

Temporal Expert Averaging (TEA) is defined as:
\begin{align}
    f_{\text{TEA}} &\triangleq f(\cdot, \theta_{\text{TEA}}), \nonumber\\
    \theta_{\text{TEA}} &\triangleq \sum_{i=1}^S \alpha_i\left(\{t_i\}_{i=1}^S, \{\theta_i\}_{i=1}^S,t_f\right) \cdot \theta_i.
\end{align}
where the coefficients $\{\alpha_i\}_{i=1}^S$ satisfy $\sum_{i=1}^S \alpha_i = 1$ and $\alpha_i \geq 0$ for all $i$. These coefficients are determined based on the temporal shift among the source domain experts $\{\theta_i\}_{i=1}^S$ and temporal information $\{t_i\}_{i=1}^S$ and $t_f$.

\subsection{TEA loss derivation}

Following \citet{rame2022diwa}, we decompose TEA’s error leveraging the similarity between WA and functional ensembling (ENS) \cite{lakshminarayanan2017simple,dietterich2000ensemble}, a more traditional way to combine a collection of weights. We also use Mean Squared Error as $\ell$ for simplicity. For TDG setting, we define Temporal ENS (T-ENS) with coefficients $\{\alpha_i\}_{i=1}^S$ as
\begin{align}
f_{\text{T-ENS}} \triangleq \sum_{i=1}^S \alpha_i f(\cdot, \theta_i). 
\end{align}

Lemma~\ref{lemma:talyor} establishes that $f_{\text{TEA}}$ approximates $f_{\text{T-ENS}}$ to first order when $\{\theta_i\}_{i=1}^S$ are close in weight space.

\begin{lemma}[TWA and T-ENS]
Given $\{\theta_i\}_{i=1}^S$ with learning procedures for different temporal experts. Denoting $\Delta_{\{\theta\}} = \max_{i=1}^S\|\theta_i - \theta_{\text{TEA}}\|_2$, $\forall(x, y) \in \mathcal{X} \times \mathcal{Y}$:
\begin{align}
 f_{\text{TEA}}(x) &= f_{\text{T-ENS}}(x) + O(\Delta^2_{\{\theta\}})  \\
 \ell(f_{\text{TEA}}(x), y) &= \ell(f_{\text{T-ENS}}(x), y) + O(\Delta^2_{\{\theta\}}).\nonumber
\end{align}
\label{lemma:talyor}
\end{lemma}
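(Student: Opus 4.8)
The plan is to prove both identities by a second-order Taylor expansion of the network $f(x,\cdot)$ in its weight argument, centered at the averaged weight $\theta_{\text{TEA}}$, and to exploit the fact that $\theta_{\text{TEA}} = \sum_{i} \alpha_i \theta_i$ is exactly the $\alpha$-barycenter of the experts, which forces the first-order terms to cancel. Fixing an arbitrary input $x$ and assuming $f(x,\cdot)$ is twice continuously differentiable in $\theta$ with second derivatives bounded uniformly over the convex hull of $\{\theta_i\}_{i=1}^S \cup \{\theta_{\text{TEA}}\}$, Taylor's theorem with remainder gives, for each expert,
\[
f(x,\theta_i) = f(x,\theta_{\text{TEA}}) + \nabla_\theta f(x,\theta_{\text{TEA}})^\top(\theta_i - \theta_{\text{TEA}}) + O\!\left(\|\theta_i - \theta_{\text{TEA}}\|_2^2\right),
\]
where each remainder is uniformly $O(\Delta_{\{\theta\}}^2)$ since $\|\theta_i - \theta_{\text{TEA}}\|_2 \le \Delta_{\{\theta\}}$ for every $i$.

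Next I would form the convex combination $f_{\text{T-ENS}}(x) = \sum_i \alpha_i f(x,\theta_i)$ and collect the three orders. The zeroth-order term is $\left(\sum_i \alpha_i\right) f(x,\theta_{\text{TEA}}) = f(x,\theta_{\text{TEA}}) = f_{\text{TEA}}(x)$ because the coefficients sum to one. The crucial step is that the first-order term vanishes: factoring out the (input-dependent but $i$-independent) gradient yields $\nabla_\theta f(x,\theta_{\text{TEA}})^\top \sum_i \alpha_i(\theta_i - \theta_{\text{TEA}})$, and $\sum_i \alpha_i(\theta_i - \theta_{\text{TEA}}) = \sum_i \alpha_i \theta_i - \theta_{\text{TEA}} = 0$ directly from the definition $\theta_{\text{TEA}} = \sum_i \alpha_i \theta_i$ together with $\sum_i \alpha_i = 1$. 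Finally the aggregated remainder is $\sum_i \alpha_i \cdot O(\Delta_{\{\theta\}}^2) = O(\Delta_{\{\theta\}}^2)$ since $\alpha$ is a probability vector. Combining these three observations gives the first claim $f_{\text{TEA}}(x) = f_{\text{T-ENS}}(x) + O(\Delta_{\{\theta\}}^2)$.

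For the loss statement I would Taylor-expand $\ell(\cdot, y)$ around $f_{\text{TEA}}(x)$, using that the squared-error loss adopted here has a Lipschitz derivative in its first argument (so its remainder is controlled). Writing $\epsilon := f_{\text{T-ENS}}(x) - f_{\text{TEA}}(x)$, which is $O(\Delta_{\{\theta\}}^2)$ by the first claim, one obtains $\ell(f_{\text{T-ENS}}(x), y) = \ell(f_{\text{TEA}}(x), y) + \partial_1 \ell(f_{\text{TEA}}(x), y)\,\epsilon + O(\epsilon^2)$. Here the linear term is of order $\Delta_{\{\theta\}}^2$ and the quadratic term of order $\Delta_{\{\theta\}}^4$, both absorbed into $O(\Delta_{\{\theta\}}^2)$, which yields the second claim.

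I expect the main obstacle to be not the algebra—the cancellation is immediate once the barycentric identity is written out—but making the $O(\cdot)$ notation rigorous, namely justifying a \emph{uniform} bound on the Taylor remainders. This requires a boundedness assumption on the Hessian of $f(x,\cdot)$ over the convex hull of the experts (and a mild regularity assumption on $\ell$), which is precisely why the locality regime $\Delta_{\{\theta\}}$ small is essential. I would therefore state this smoothness-plus-locality condition explicitly as the standing assumption carried over from DiWA, under which all the $O(\Delta_{\{\theta\}}^2)$ bounds hold uniformly in $(x,y)$.
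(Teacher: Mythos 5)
Your proposal is correct and follows essentially the same route as the paper's own proof: a first-order Taylor expansion of each expert's prediction around $\theta_{\text{TEA}}$, cancellation of the linear term via the barycentric identity $\sum_{i=1}^S \alpha_i(\theta_i - \theta_{\text{TEA}}) = 0$ (the paper explicitly notes this replaces the $\sum_i \Delta_i = 0$ cancellation of the uniform-weight case), a remainder bound using $\|\Delta_i\|_2 \le \Delta_{\{\theta\}}$ together with $\sum_i \alpha_i = 1$, and then an expansion of $\ell$ in its first argument to transfer the functional approximation to the loss. Your treatment is in fact slightly more careful than the paper's, since you state explicitly the smoothness and bounded-Hessian assumptions that make the uniform $O(\Delta_{\{\theta\}}^2)$ bounds rigorous, which the paper leaves implicit.
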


\begin{proof}
This proof has two components:
\begin{itemize}
    \item to establish the functional approximation, it performs Taylor expansion of the models' predictions at the first order.
    \item to establish the loss approximation, it performs Taylor expansion of the loss at the first order.
\end{itemize}
\textbf{Functional approximation} With a Taylor expansion at the first order of the models' predictions w.r.t. parameters $\theta$:

\begin{align}
&f_{\theta_i} = f_{\text{TEA}} + \nabla f |_{\text{TEA}}\Delta_i + O\left(\|\Delta_i\|^2_2\right) \nonumber\\
&f_{\text{T-ENS}} - f_{\text{TEA}} \nonumber\\
&= \sum_{i=1}^S \alpha_i \nabla f |_{\text{TEA}}\Delta_i
+ \sum_{i=1}^S \alpha_i O\left(\|\Delta_i\|^2_2\right) \nonumber,
\end{align}
where $\Delta_i = \theta_i - \theta_{\text{TWA}}$. 

Note that unlike in the equal weighting case, we don't have $\sum_{i=1}^S \Delta_i = 0$ for weighted averaging. Instead, we have $\sum_{i=1}^S \alpha_i \Delta_i = 0$. Therefore:
\begin{align}
&f_{\text{T-ENS}} - f_{\text{TEA}} \nonumber\\
&= \sum_{i=1}^S \alpha_i \nabla f |_{\text{TEA}}\Delta_i + \sum_{i=1}^S \alpha_i O\left(\|\Delta_i\|^2_2\right) \nonumber\\
&= \nabla f |_{\text{TWA}}\sum_{i=1}^S \alpha_i \Delta_i + O\left(\sum_{i=1}^S \alpha_i \|\Delta_i\|^2_2\right) \nonumber\\
&= O\left(\sum_{i=1}^S \alpha_i \|\Delta_i\|^2_2\right) \nonumber
\end{align}

Since $\Delta_i \leq \Delta_{\{\theta\}}$ for all $i$, and $\sum_{i=1}^S \alpha_i = 1$, we have:
\begin{align}
f_{\text{T-ENS}} - f_{\text{TEA}} &= O\left(\sum_{i=1}^S \alpha_i \Delta_{\{\theta\}}^2\right) \nonumber\\
&= O\left(\Delta_{\{\theta\}}^2 \sum_{i=1}^S \alpha_i\right) \nonumber\\
&= O\left(\Delta_{\{\theta\}}^2\right) \nonumber
\end{align}

\noindent\textbf{Loss approximation.} With a Taylor expansion at the zeroth order of the loss w.r.t. its first input and injecting the functional approximation:
\begin{align}
\ell(f_{\text{T-ENS}}(x); y) &= \ell(f_{\text{TWA}}(x); y) \nonumber\\ 
&+ O(\|f_{\text{T-ENS}}(x) - f_{\text{TEA}}(x)\|_2) \nonumber\\
\ell(f_{\text{T-ENS}}(x); y) &= \ell(f_{\text{TEA}}(x); y) + O\left(\Delta_{\{\theta\}}^2\right) \nonumber
\end{align}

\end{proof}


\subsection{Bias-variance-covariance-locality Decomposition for TEA}

We can derive the following decomposition of TEA's expected test error in the future domain. The expectation is over the joint distribution describing the $S$ learning procedures $\{l_{i}\}_{i=1}^S$. (\textit{Note that in the temporal domain generalization (TDG) setting, models from different timestamps may have different biases and variances due to the evolution of data distributions over time. This temporal heterogeneity is a key characteristic that distinguishes TDG from standard DG.})

\begin{proposition}[Bias-variance-covariance-locality decomposition for temporal weight averaging]
Denoting $\bar{f}_{i}(x) = \mathbb{E}_{l_{i}}[f(x, \theta(l_{i}))]$ as the expected prediction of an expert model for timestamp $t_i$, $\mathbb{E}_{f} = \mathbb{E}_{(x,y)\sim p_{f}}$ and $\mathbf{l} = \{l_1,\ldots,l_S\}$,  the expected generalization error on future domain $t_f$ of $\theta_{\text{TWA}} = \sum_{i=1}^S \alpha_i\cdot \theta_i$ over the joint distribution of the learning procedures is:
\begin{align}
&\mathbb{E}_{\mathbf{l}}[\mathcal{E}_{f}(\theta_{\text{TEA}})] = \mathbb{E}_{f}\left[ \mathcal{B} + \mathcal{V} +\mathcal{C}  \right] + O(\bar{\Delta}^2), \label{eq:bvcl_supp}
\end{align}
where 
\begin{align}
&\mathcal{B}=\left(\sum_{i=1}^S \alpha_i \cdot \text{bias}_i\right)^2, \ \text{bias}_i = y - \bar{f}_{i}(x), \nonumber\\
&\mathcal{V}=\sum_{i=1}^S \alpha_i^2 \cdot \text{var}_i,\  \text{var}_i = \mathbb{E}_{l_i}\left[ dev_i^2 \right],  \nonumber\\
&\mathcal{C} = \sum_{i \neq j} \alpha_i \alpha_j \text{cov}_{i,j}, \  \text{cov}_{i,j} = \mathbb{E}_{\{l_i,l_j\}}\left[ dev_i\cdot dev_j \right], \nonumber\\
& with\ dev_i = f(x, \theta(l_{i})) - \bar{f}_{i}(x),\nonumber\\
&\bar{\Delta}^2 = \mathbb{E}[\Delta^2_{\{\theta\}}] \text{ with } \Delta_{\{\theta\}} = \max_{i=1}^S \|\theta_i - \theta_{\text{TWA}}\|_2.\nonumber
\end{align}
\end{proposition}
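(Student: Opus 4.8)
The plan is to reduce the weight-space quantity $\mathbb{E}_{\mathbf{l}}[\mathcal{E}_f(\theta_{\text{TEA}})]$ to a function-space error and then carry out a standard bias--variance--covariance decomposition of the mean squared error. First I would invoke Lemma~\ref{lemma:talyor}: since $\ell(f_{\text{TEA}}(x),y) = \ell(f_{\text{T-ENS}}(x),y) + O(\Delta_{\{\theta\}}^2)$ holds pointwise, taking $\mathbb{E}_f$ then $\mathbb{E}_{\mathbf{l}}$ and using $\bar{\Delta}^2 = \mathbb{E}[\Delta_{\{\theta\}}^2]$ gives
\[
\mathbb{E}_{\mathbf{l}}[\mathcal{E}_f(\theta_{\text{TEA}})] = \mathbb{E}_{\mathbf{l}}\,\mathbb{E}_f\!\left[\ell(f_{\text{T-ENS}}(x),y)\right] + O(\bar{\Delta}^2).
\]
This isolates the locality residual once and for all, leaving only the function-space ensemble error to decompose.

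Next I would expand the MSE of the ensemble. Using $\sum_i \alpha_i = 1$, I can write $y - f_{\text{T-ENS}}(x) = \sum_i \alpha_i(y - f(x,\theta_i))$ and split each summand through the expected prediction $\bar{f}_i(x)$,
\[
y - f_{\text{T-ENS}}(x) = \sum_{i=1}^S \alpha_i\,\text{bias}_i - \sum_{i=1}^S \alpha_i\,dev_i,
\]
with $\text{bias}_i = y - \bar{f}_i(x)$ and $dev_i = f(x,\theta(l_i)) - \bar{f}_i(x)$. Squaring yields a pure-bias square, a cross term, and a pure-deviation square. Taking $\mathbb{E}_{\mathbf{l}}$, the crucial simplification is that $\text{bias}_i$ is deterministic in $\mathbf{l}$ whereas $\mathbb{E}_{l_i}[dev_i] = 0$ by the definition of $\bar{f}_i$; the cross term therefore vanishes in expectation, and the bias square, being independent of $\mathbf{l}$, survives as $\mathcal{B} = (\sum_i \alpha_i\,\text{bias}_i)^2$.

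For the deviation square I would split the double sum into diagonal and off-diagonal parts, $(\sum_i \alpha_i\,dev_i)^2 = \sum_i \alpha_i^2\,dev_i^2 + \sum_{i \neq j} \alpha_i \alpha_j\,dev_i\,dev_j$. Taking $\mathbb{E}_{\mathbf{l}}$ and identifying $\mathbb{E}_{l_i}[dev_i^2] = \text{var}_i$ and $\mathbb{E}_{\{l_i,l_j\}}[dev_i\,dev_j] = \text{cov}_{i,j}$ produces $\mathcal{V} = \sum_i \alpha_i^2\,\text{var}_i$ and $\mathcal{C} = \sum_{i \neq j} \alpha_i \alpha_j\,\text{cov}_{i,j}$. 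Keeping the outer $\mathbb{E}_f$ and reassembling with the first step gives the claimed identity.

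The main obstacle I anticipate is bookkeeping the two distinct expectations correctly: the $O(\bar{\Delta}^2)$ residual arises from averaging the pointwise Lemma bound over $\mathbf{l}$ (using that the $O(\cdot)$ bound commutes with the expectation together with the definition of $\bar{\Delta}^2$), while the bias, variance and covariance terms appear only after the exact MSE expansion of the \emph{function-space} ensemble. It matters that $\text{cov}_{i,j}$ is already defined through the joint expectation $\mathbb{E}_{\{l_i,l_j\}}$, so no independence between learning procedures is required---only $\mathbb{E}_{l_i}[dev_i] = 0$, which is exactly what annihilates the cross term. A secondary subtlety inherited from the weighted setting is that $\sum_i \alpha_i \Delta_i = 0$ rather than $\sum_i \Delta_i = 0$, but this is already absorbed into Lemma~\ref{lemma:talyor} and does not resurface here.
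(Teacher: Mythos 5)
Your proof is correct and follows essentially the same route as the paper's: both reduce the weight-space error to the functional ensemble $f_{\text{T-ENS}}$ via Lemma~\ref{lemma:talyor} and then perform the bias--variance--covariance decomposition of the ensemble MSE, with the $O(\bar{\Delta}^2)$ term absorbed by taking expectations over $\mathbf{l}$. The only cosmetic differences are that you apply the Taylor lemma first rather than last, and you inline the standard cross-term-vanishing argument (using $\mathbb{E}_{l_i}[dev_i]=0$) where the paper instead invokes the Kohavi--Wolpert bias--variance decomposition as a black box.
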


\begin{proof}

Following \citet{rame2022diwa}, we use the he bias-variance decomposition in \citet{kohavi1996bias} with $f_{\text{T-ENS}} \triangleq \sum_{i=1}^S \alpha_i f(\cdot, \theta(l_{i}))$ to decompose the expected generalization error:
\begin{align}
&\mathbb{E}_{\mathbf{l}}[\mathcal{E}_{f}(\{\theta(l_{i})\}_{i=1}^S)] \nonumber\\
&= \mathbb{E}_{f}[\text{Bias}\{f_{\text{T-ENS}}|(x,y)\}^2 + \text{Var}\{f_{\text{T-ENS}}|x\}],\nonumber
\end{align}
where bias term becomes:
\begin{align}
&\text{Bias}\{f_{\text{T-ENS}}|(x,y)\} \nonumber\\
&= y - \mathbb{E}_{\mathbf{l}}\left[\sum_{i=1}^S \alpha_i f(x, \theta(l_{i}))\right] \nonumber\\
&= y - \sum_{i=1}^S \alpha_i \mathbb{E}_{\mathbf{l}}[f(x, \theta(l_{i}))] \nonumber\\
&= y - \sum_{i=1}^S \alpha_i \bar{f}_{i}(x) \nonumber\\
&= \sum_{i=1}^S \alpha_i (y - \bar{f}_{i}(x)) \nonumber\\
&= \sum_{i=1}^S \alpha_i \text{bias}_i(x, y) \nonumber
\end{align}

Thus, the squared bias term is:
\begin{align}
\text{Bias}\{f_{\text{T-ENS}}|(x,y)\}^2 \nonumber
= \left(\sum_{i=1}^S \alpha_i \text{bias}_i\right)^2 \nonumber
\end{align}

For the variance term, denoting $\text{dev}_i = f(x, \theta(l_{i})) - \bar{f}_{i}(x)$, we have:

\smallskip
\adjustbox{width=0.49\textwidth}{%
$\displaystyle
\begin{aligned}
&\text{Var}\{f_{\text{T-ENS}}|x\} \\
&= \mathbb{E}_{\mathbf{l}}\left[\left(\sum_{i=1}^S \alpha_i f(x, \theta(l_{i})) - \mathbb{E}_{\mathbf{l}}\left[\sum_{i=1}^S \alpha_i f(x, \theta(l_{i}))\right]\right)^2\right]\\
&= \mathbb{E}_{\mathbf{l}}\left[\left(\sum_{i=1}^S \alpha_i (f(x, \theta(l_{i})) - \bar{f}_{i}(x))\right)^2\right]\\
&= \mathbb{E}_{\mathbf{l}}\left[\sum_{i=1}^S \sum_{j=1}^S \alpha_i \alpha_j \cdot \text{dev}_i \cdot \text{dev}_j\right]\\
&= \sum_{i=1}^S \alpha_i^2 \mathbb{E}_{\mathbf{l}}[\text{dev}_i^2] + \sum_{i\neq j} \alpha_i \alpha_j \mathbb{E}_{\mathbf{l}}[\text{dev}_i \cdot \text{dev}_j]\\
&= \sum_{i=1}^S \alpha_i^2 \text{var}_i + \sum_{i\neq j} \alpha_i \alpha_j \text{cov}_{i,j}
\end{aligned}
$}

\noindent\textbf{Combination with Lemma 1} We recall that per our adapted Lemma 1:
\begin{align}
\ell(f_{\text{TEA}}(x), y) = \ell(f_{\text{T-ENS}}(x), y) + O(\Delta^2_{\{\theta\}}).\nonumber
\end{align}
Taking the expectation over the learning procedures and combining all terms:
\begin{align}
\mathbb{E}[\mathcal{E}_{f}(\theta_{\text{TEA}})] &= \mathbb{E}_{f}\left[ \left(\sum_{i=1}^S \alpha_i \text{bias}_i\right)^2\right] \nonumber\\
&+ \mathbb{E}_{f}\left[\sum_{i=1}^S \alpha_i^2 \text{var}_i\right] \nonumber\\
&+ \mathbb{E}_{f}\left[\sum_{i \neq j} \alpha_i \alpha_j \text{ cov}_{i,j} \right] \nonumber\\
&+ O(\bar{\Delta}^2)\nonumber
\end{align}
\end{proof}

\subsection{Theoretical Insights for TEA} 

From Equation \ref{eq:bvcl_supp}, we can see that generalization error can be reduced by minimizing bias $\mathcal{B}$, variance $\mathcal{V}$, covariance $\mathcal{C}$, and locality $\bar{\Delta}^2$. However, due to the complexity of real-world data and models, finding an optimal analytical solution is nearly impossible. Nevertheless, similar to \citet{rame2022diwa}, we can derive practical insights for designing TEA by analyzing the relationships between these four terms, model properties, and averaging coefficients.

\smallskip
\noindent\textbf{Insight 1} \textit{Tradeoff between Functional Diversity and Parameter Similarity among Experts.} Covariance $\mathcal{C}$ reduction necessitates functional diversity among experts, while the locality constraint $\bar{\Delta}^2$ demands parameter similarity among experts.

The covariance term increases when the predictions of $\{f(\cdot,\theta(l_{i}))\}_{i=1}^S$ are correlated, suggesting that DiWA's~\cite{rame2022diwa} approach to reduce covariance by encouraging functional diversity remains effective. However, the locality term $\bar{\Delta}^2$ simultaneously constrains the weights to remain close in parameter space. This tradeoff suggests that when training these expert models, we should find an appropriate balance between encouraging diverse predictions and maintaining parameter similarity.

\smallskip
\noindent\textbf{Insight 2} \textit{Tradeoff between Bias and Variance via Averaging Coefficients.} Reducing variance $\mathcal{V}$ requires averaging weights evenly, while reducing bias $\mathcal{B}$ demands concentrating coefficients on experts with lower bias magnitudes on future data. 

Insight 2 is obtained by introducing 2 assumptions specific to the TDG for further discussion about bias and variance. 

\begin{assumption}[Ordered Bias Magnitudes]
The models can be ordered by expected bias magnitudes on future domains such that $\mathbb{E}_{f}\left[\text{bias}_{m_1}^2\right] \geq \mathbb{E}_{f}\left[\text{bias}_{m_1}^2\right] \geq \cdots \geq \mathbb{E}_{f}\left[\text{bias}_{m_S}^2\right]$, with $\{m_j\}_{j=1}^S$ being a permutation of $\{i\}_{i=1}^S$.
\label{insight:diverse_similar}
\label{assump:bias}
\end{assumption}

\begin{assumption}[Equal Variance Experts]
The variance of each expert's prediction is equal across all experts, such that $\mathbb{E}_{f}\left[\text{var}_i\right] = v$ for all $i \in \{1,2,...,M\}$.
\label{assump:var}
\end{assumption}

\begin{lemma}[\textbf{Optimal Averaging Coefficients for Bias Minimization}]
\label{lemma:optimal_bias_coeff}
Let the bias of model $i$ be:
\begin{align}
b_i(x,y) = \operatorname{bias}_i(x,y),
\sigma_i^{2} := \mathbb{E}_{f}\left[b_i^{2}\right],\nonumber
\end{align}
and define the root-mean-square magnitude:
\begin{align}
\sigma_i = \sqrt{\sigma_i^{2}} \quad (i=1,\ldots,S).\nonumber
\end{align}
According to Assumption~\ref{assump:bias}, magnitudes are ordered $\sigma_{m_1}\geq\sigma_{m_2}\geq\cdots\geq\sigma_{m_S}$, where $\{m_j\}_{j=1}^{S}$ is a permutation of $\{1,\ldots,S\}$. For convex weights $\boldsymbol{\alpha}\in\Delta^{S}:=\{\alpha_i\geq0,\, \sum_{i=1}^{S}\alpha_i=1\}$, consider the combined bias loss:
\begin{align}
L(\boldsymbol{\alpha}) := \mathbb{E}_{f}\left[\left(\sum_{i=1}^{S}\alpha_i b_i\right)^{2}\right].
\end{align}
If \textbf{no information} is available on the pairwise bias covariances $\Sigma_{ij}:=\mathbb{E}_{f}[b_i b_j],$ $(i\neq j)$, then the minimax problem:
\begin{align}
\min_{\boldsymbol{\alpha}\in\Delta^{S}}\; \max_{\Sigma\text{ s.t. }\operatorname{diag}(\Sigma)=\boldsymbol{\sigma}^{2}} L(\boldsymbol{\alpha})
\end{align}
is solved by:
\begin{align}
\boxed{\alpha^{\star}_{m_S}=1,\quad \alpha^{\star}_{i}=0 \;\text{for}\;i\neq m_S}
\end{align}
with $L(\boldsymbol{\alpha}^{\star})=\sigma_{m_S}^{2}$.
\end{lemma}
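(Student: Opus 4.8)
The plan is to recognize $L(\boldsymbol{\alpha})$ as a quadratic form governed by the bias second-moment matrix and then dispatch the minimax in the natural inner-then-outer order. First I would write the combined bias loss as $L(\boldsymbol{\alpha}) = \boldsymbol{\alpha}^\top \Sigma \boldsymbol{\alpha}$, where $\Sigma_{ij} = \mathbb{E}_{f}[b_i b_j]$ is the symmetric positive semidefinite second-moment matrix of the biases. Its diagonal is pinned by hypothesis, $\operatorname{diag}(\Sigma) = \boldsymbol{\sigma}^{2}$, while the off-diagonal entries are exactly the unknown pairwise covariances that the adversary in the inner maximization is free to choose.

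For the inner maximization I would fix $\boldsymbol{\alpha} \in \Delta^{S}$ and bound the objective entrywise. Writing $\boldsymbol{\alpha}^\top \Sigma \boldsymbol{\alpha} = \sum_{i,j} \alpha_i \alpha_j \Sigma_{ij}$ and using both $\alpha_i \alpha_j \geq 0$ and the Cauchy--Schwarz bound $\Sigma_{ij} = \mathbb{E}_{f}[b_i b_j] \leq \sigma_i \sigma_j$, I obtain $L(\boldsymbol{\alpha}) \leq \sum_{i,j} \alpha_i \alpha_j \sigma_i \sigma_j = (\sum_{i} \alpha_i \sigma_i)^2$. This upper bound is attained by the rank-one choice $\Sigma = \boldsymbol{\sigma}\boldsymbol{\sigma}^\top$, which is an admissible second-moment matrix: it has the prescribed diagonal, is positive semidefinite, and is realized by perfectly correlated biases $b_i = \sigma_i z$ with $\mathbb{E}_{f}[z^{2}] = 1$. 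Hence the worst-case covariance collapses the objective exactly to $\max_{\Sigma} L(\boldsymbol{\alpha}) = (\sum_{i} \alpha_i \sigma_i)^2$.

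It then remains to solve the outer problem $\min_{\boldsymbol{\alpha} \in \Delta^{S}} (\sum_{i} \alpha_i \sigma_i)^2$. Since $\sigma_i \geq 0$ and $\alpha_i \geq 0$, the inner sum is nonnegative and squaring is monotone, so this is equivalent to minimizing the linear functional $\sum_{i} \alpha_i \sigma_i$ over the simplex. A linear function attains its simplex minimum at a vertex, namely the coordinate with the smallest magnitude, which the ordering assumption identifies as $\sigma_{m_S}$. Therefore $\boldsymbol{\alpha}^{\star} = e_{m_S}$ places all mass on the least-biased expert and $L(\boldsymbol{\alpha}^{\star}) = \sigma_{m_S}^{2}$, which is the claimed solution.

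The main obstacle, and really the only non-routine step, is justifying the inner maximization rigorously: I must establish both the upper bound, which crucially relies on $\alpha_i \alpha_j \geq 0$ so that the per-entry Cauchy--Schwarz estimates aggregate in the right direction, and its attainability by an \emph{admissible} $\Sigma$, which requires checking that the rank-one correlation matrix is a genuine positive semidefinite second-moment matrix with the fixed diagonal. Once this reduces the adversary's influence to the term $(\sum_{i} \alpha_i \sigma_i)^2$, the outer minimization is an elementary linear program over the simplex and the vertex solution follows immediately.
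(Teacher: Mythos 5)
Your proof is correct and follows essentially the same route as the paper's: rewrite $L(\boldsymbol{\alpha})=\boldsymbol{\alpha}^{\top}\Sigma\boldsymbol{\alpha}$, use the entrywise Cauchy--Schwarz bound together with $\alpha_i\alpha_j\geq 0$ to collapse the inner maximization to $\left(\sum_i \alpha_i\sigma_i\right)^2$, and then minimize this linear-in-$\boldsymbol{\alpha}$ quantity at the simplex vertex $e_{m_S}$. Your explicit verification that the worst-case $\Sigma=\boldsymbol{\sigma}\boldsymbol{\sigma}^{\top}$ is an admissible second-moment matrix (positive semidefinite, correct diagonal, realized by perfectly correlated biases) is a small but welcome tightening of a step the paper merely asserts.
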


\begin{proof}
We can write $L(\boldsymbol{\alpha})=\boldsymbol{\alpha}^{\top}\Sigma\boldsymbol{\alpha}$ with unknown positive-semidefinite matrix $\Sigma$ satisfying $\Sigma_{ii}=\sigma_i^{2}$. By the Cauchy-Schwarz inequality, $|\Sigma_{ij}|\leq\sigma_i\sigma_j$. The worst case occurs when all covariances reach the extreme value $\Sigma_{ij}=\sigma_i\sigma_j$, yielding:
\begin{align}
\max_{\Sigma}L(\boldsymbol{\alpha}) = \left(\sum_{i=1}^{S}\alpha_i\sigma_i\right)^{2}.
\end{align}
Since $\sum_{i}\alpha_i\sigma_i$ is a convex combination of the ordered set $\{\sigma_{m_j}\}$, its minimum over the simplex $\Delta^{S}$ is attained by placing all weight on the smallest RMS magnitude $\sigma_{m_S}$, which gives the stated $\boldsymbol{\alpha}^{\star}$ and the minimax value $L(\boldsymbol{\alpha}^{\star})=\sigma_{m_S}^{2}$.
\end{proof}

\begin{lemma}[Optimal Averaging Coefficients for Variance Minimization]
\label{lemma:optimal_variance_coeff}
Consider the variance term with equal variances $\mathbb{E}_{f}\left[\text{var}_i\right] = v$ for all $i \in \{1,\ldots,S\}$:
\begin{align}
\mathbb{E}_{f}\left[\mathcal{V}\right] = v\sum_{i=1}^S \alpha_i^2.
\end{align}
For averaging coefficients $\boldsymbol{\alpha} \in \Delta^S := \{\alpha_i \geq 0, \sum_{i=1}^S \alpha_i = 1\}$, the variance term is minimized when weights are distributed equally across all models:
\begin{align}
\boxed{\alpha_i^{\star} = \frac{1}{S} \text{ for all } i}
\end{align}
with optimal variance $v \cdot \frac{1}{S}$.
\end{lemma}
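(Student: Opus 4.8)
The plan is to recognize this as a standard constrained optimization of a strictly convex quadratic over the probability simplex, for which the symmetric uniform point is the unique minimizer. Since the variance $v$ is a nonnegative constant that multiplies the entire objective, the minimization of $v\sum_{i=1}^S \alpha_i^2$ over $\boldsymbol{\alpha}\in\Delta^S$ is equivalent (when $v>0$) to minimizing $\sum_{i=1}^S \alpha_i^2$ subject to $\alpha_i\geq 0$ and $\sum_{i=1}^S\alpha_i=1$; the case $v=0$ is degenerate since then every feasible $\boldsymbol{\alpha}$ is optimal, and I would note this aside.

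The cleanest route is a direct application of the Cauchy--Schwarz inequality. First I would apply Cauchy--Schwarz to the vectors $(\alpha_1,\ldots,\alpha_S)$ and $(1,\ldots,1)$, giving
\begin{align}
1 = \left(\sum_{i=1}^S \alpha_i\right)^2 \leq S\sum_{i=1}^S \alpha_i^2,
\end{align}
so that $\sum_{i=1}^S\alpha_i^2 \geq \tfrac{1}{S}$ for every $\boldsymbol{\alpha}\in\Delta^S$. This establishes the lower bound. Next I would invoke the equality condition of Cauchy--Schwarz, which holds precisely when the two vectors are proportional, i.e.\ when all $\alpha_i$ are equal; combined with the normalization constraint $\sum_i\alpha_i=1$ this forces $\alpha_i^{\star}=\tfrac{1}{S}$ for all $i$. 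Finally I would verify that this candidate is feasible (each $\tfrac{1}{S}\geq 0$ and they sum to one) and achieves the bound, since $\sum_{i=1}^S(\tfrac{1}{S})^2 = S\cdot\tfrac{1}{S^2}=\tfrac{1}{S}$, yielding the optimal variance value $v\cdot\tfrac{1}{S}$.

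As an alternative that also certifies uniqueness, I would set up the Lagrangian $\mathcal{L}(\boldsymbol{\alpha},\lambda)=\sum_{i=1}^S\alpha_i^2-\lambda\bigl(\sum_{i=1}^S\alpha_i-1\bigr)$, ignoring the nonnegativity constraints provisionally; the stationarity condition $2\alpha_i-\lambda=0$ forces every $\alpha_i=\lambda/2$ to be constant across $i$, and the normalization then pins $\alpha_i=\tfrac{1}{S}$. Since this interior point has all coordinates strictly positive, the omitted inequality constraints are inactive and the KKT conditions are satisfied; strict convexity of the objective on the convex feasible set then guarantees this stationary point is the unique global minimizer. Honestly, there is no substantive obstacle here — the statement is a textbook convex program — so the only care required is in two bookkeeping steps: pinning down the equality case of Cauchy--Schwarz (or, equivalently, confirming the boundary of the simplex cannot beat the interior point via the strict convexity argument) and handling the trivial $v=0$ corner case. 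I would present the Cauchy--Schwarz version as the main proof for brevity and mention the Lagrangian argument as the uniqueness certificate.
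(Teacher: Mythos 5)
Your proof is correct and follows essentially the same route as the paper's: a direct Cauchy--Schwarz argument on $(\alpha_1,\ldots,\alpha_S)$ against the all-ones vector, with the equality condition pinning down $\alpha_i^{\star}=\tfrac{1}{S}$. Your additional remarks (the $v=0$ corner case and the Lagrangian/KKT uniqueness certificate) go slightly beyond what the paper writes down, but the core argument is identical.
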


\begin{proof}
We seek to minimize $\sum_{i=1}^S \alpha_i^2$ subject to the constraints $\sum_{i=1}^S \alpha_i = 1$ and $\alpha_i \geq 0$. By the Cauchy-Schwarz inequality:
\begin{align}
\left(\sum_{i=1}^S \alpha_i\right)^2 \leq S \sum_{i=1}^S \alpha_i^2,
\end{align}
with equality if and only if all $\alpha_i$ are equal. Since $\sum_{i=1}^S \alpha_i = 1$, we have:
\begin{align}
1 = \left(\sum_{i=1}^S \alpha_i\right)^2 \leq S \sum_{i=1}^S \alpha_i^2,
\end{align}
which implies $\sum_{i=1}^S \alpha_i^2 \geq \frac{1}{S}$. Equality is achieved when $\alpha_i = \frac{1}{S}$ for all $i$, giving the optimal solution.
\end{proof}

In summary, Lemma~\ref{lemma:optimal_bias_coeff} indicates that optimizing the bias term requires concentrating weight on experts with smaller bias magnitude on future domains, while Lemma~\ref{lemma:optimal_variance_coeff} suggests that minimizing variance requires the opposite approach—distributing weight as evenly as possible across all experts. This creates a fundamental tradeoff between bias and variance in the selection of averaging coefficients.

 \smallskip
\noindent\textbf{Discussion about Assumptions.} Assumption~\ref{assump:bias} is similar to the smooth distribution shift assumption used by most prior TDG methods~\cite{Bai2022TemporalDG,Zeng2023ForeseeWY,nasery2021training,xie2024weight,xie2024evolving}, allowing us to model distribution change and leverage temporal information to predict future parameter or feature. Assumption~\ref{assump:var} is reasonable when all experts share the same architecture, optimization procedure and hyperparameters, differing only in the specific temporal domains they've been optimized to excel in.
\section{Additional Results}

\begin{figure*}[t]
    \centering
    \begin{subfigure}[t]{0.45\textwidth}
        \includegraphics[width=\linewidth]{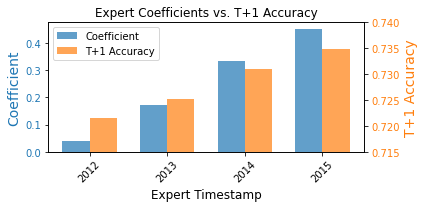}
        \caption{Huffpost.}
        \label{fig:huffpost_coeffs_supp}
    \end{subfigure}
    \begin{subfigure}[t]{0.45\textwidth}
        \includegraphics[width=\linewidth]{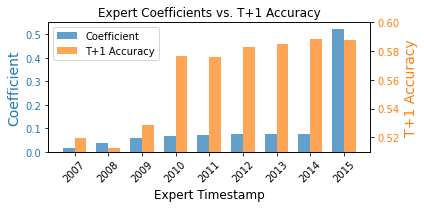}
        \caption{Arxiv.}
        \label{fig:arxiv_coeffs_supp}
    \end{subfigure}
    \par\bigskip
    \begin{subfigure}[t]{0.45\textwidth}
        \includegraphics[width=\linewidth]{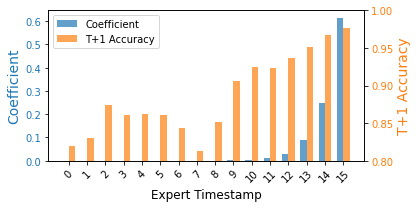}
        \caption{Yearbook.}
        \label{fig:yearbook_coeffs_supp}
    \end{subfigure}
    \begin{subfigure}[t]{0.47\textwidth}
        \includegraphics[width=\linewidth]{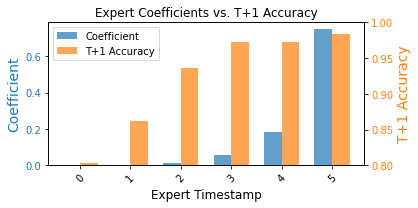}
        \caption{RMNIST.}
        \label{fig:rmnist_coeffs_supp}
    \end{subfigure}

    \par\bigskip 

    \begin{subfigure}[t]{0.45\textwidth}
        \includegraphics[width=\linewidth]{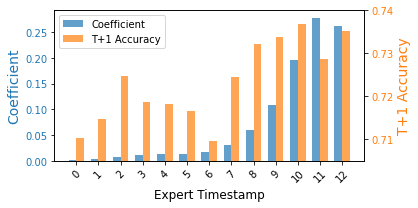}
        \caption{FMoW.}
        \label{fig:fmow_coeffs_supp}
    \end{subfigure}
    \begin{subfigure}[t]{0.46\textwidth}
        \includegraphics[width=\linewidth]{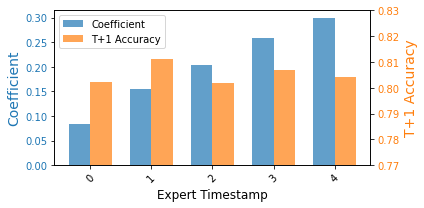}
        \caption{CLEAR-10.}
        \label{fig:clear10_coeffs_supp}
    \end{subfigure}

    \par\bigskip

    \begin{subfigure}[t]{0.45\textwidth}
        \includegraphics[width=\linewidth]{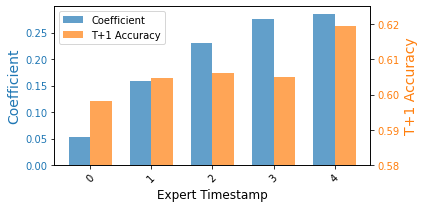}
        \caption{CLEAR-100.}
        \label{fig:clear100_coeffs_supp}
    \end{subfigure}

    \caption{Visualization of averaging coefficients and accuracies of experts on target domain $D_{S+1}$.}
    \label{fig:coeff_ablation_supp}
\end{figure*}

We show the coefficients vs. $D_{S+1}$ accuracy across all benchmarks in Figure~\ref{fig:coeff_ablation_supp}. The results reveal diverse performance patterns among experts, with a clear temporal trend: experts trained on earlier domains consistently achieve lower accuracy on the subsequent domain. This systematic pattern demonstrates that expert functionality undergoes structured evolution throughout the finetuning process. In addition, our temporal trajectory forecasting approach successfully identifies this pattern and assigns higher coefficients to experts that demonstrate superior performance on future domains, effectively leveraging the temporal dynamics to improve expert selection and utilization.

\end{document}